\crefname{section}{Sec.}{Secs.}
\Crefname{section}{Section}{Sections}
\Crefname{table}{Table}{Tables}
\crefname{table}{Tab.}{Tabs.}
\newcommand{\diff}{\mathop{}\!\mathrm{d}}
\newcommand{\expt}{\mathbb{E}}
\newcommand{\mcal}[1]{\mathcal{#1}}
\newcommand{\cmark}{\ding{51}}%
\newcommand{\xmark}{\ding{55}}%
\DeclareMathOperator*{\argmax}{arg\,max}
\newtheorem{theorem}{Theorem}
\newtheorem*{theorem*}{Theorem}
\newtheorem*{lemma*}{Lemma}
\newtheorem{definition}{Definition}
\newtheorem{lemma}{Lemma}
\theoremstyle{remark}
\theoremstyle{definition}
\def\confName{CVPR}
\def\confYear{2022}
\begin{document}
\title{On Learning Contrastive Representations for Learning with Noisy Labels}

\author{
Li Yi\textsuperscript{1}\quad
Sheng Liu\textsuperscript{2}\quad
Qi She\textsuperscript{3}\quad
A. Ian McLeod\textsuperscript{1}\quad
Boyu Wang$^*$\textsuperscript{1, 4}\\
\textsuperscript{1}University of Western Ontario, 
\textsuperscript{2}NYU Center for Data Science\\
\textsuperscript{3}ByteDance Inc. \quad
\textsuperscript{4}Vector Institute\\
{\tt\small lyi7@uwo.ca}
\quad
{\tt\small shengliu@nyu.edu}
\quad
{\tt\small sheqi1991@gmail.com}
\\
{\tt\small aimcleod@uwo.ca}
\quad
{\tt\small bwang@csd.uwo.ca}
}

\maketitle

\footnotetext[1]{Corresponding author}

\begin{abstract}
    Deep neural networks are able to memorize noisy labels easily with a softmax cross entropy (CE) loss.
    Previous studies attempted to address this issue focus on incorporating a  noise-robust loss function to the CE loss.
    However, the memorization issue is alleviated but still remains due to the non-robust CE loss.
    To address this issue, we focus on learning robust contrastive representations of data on which the classifier is hard to memorize the label noise under the CE loss.
    We propose a novel contrastive regularization function to learn such representations over noisy data where label noise does not dominate the representation learning.
    By theoretically investigating the representations induced by the proposed regularization function, we reveal that the learned representations keep information related to true labels and discard information related to corrupted labels. 
    Moreover, our theoretical results also indicate that the learned representations are robust to the label noise.
    The effectiveness of this method is demonstrated with experiments on benchmark datasets.
    Code is available at: \url{https://github.com/liyi01827/noisy-contrastive}
\end{abstract}

\begin{figure}[t]
    \centering
    \includegraphics[height=4.69cm]{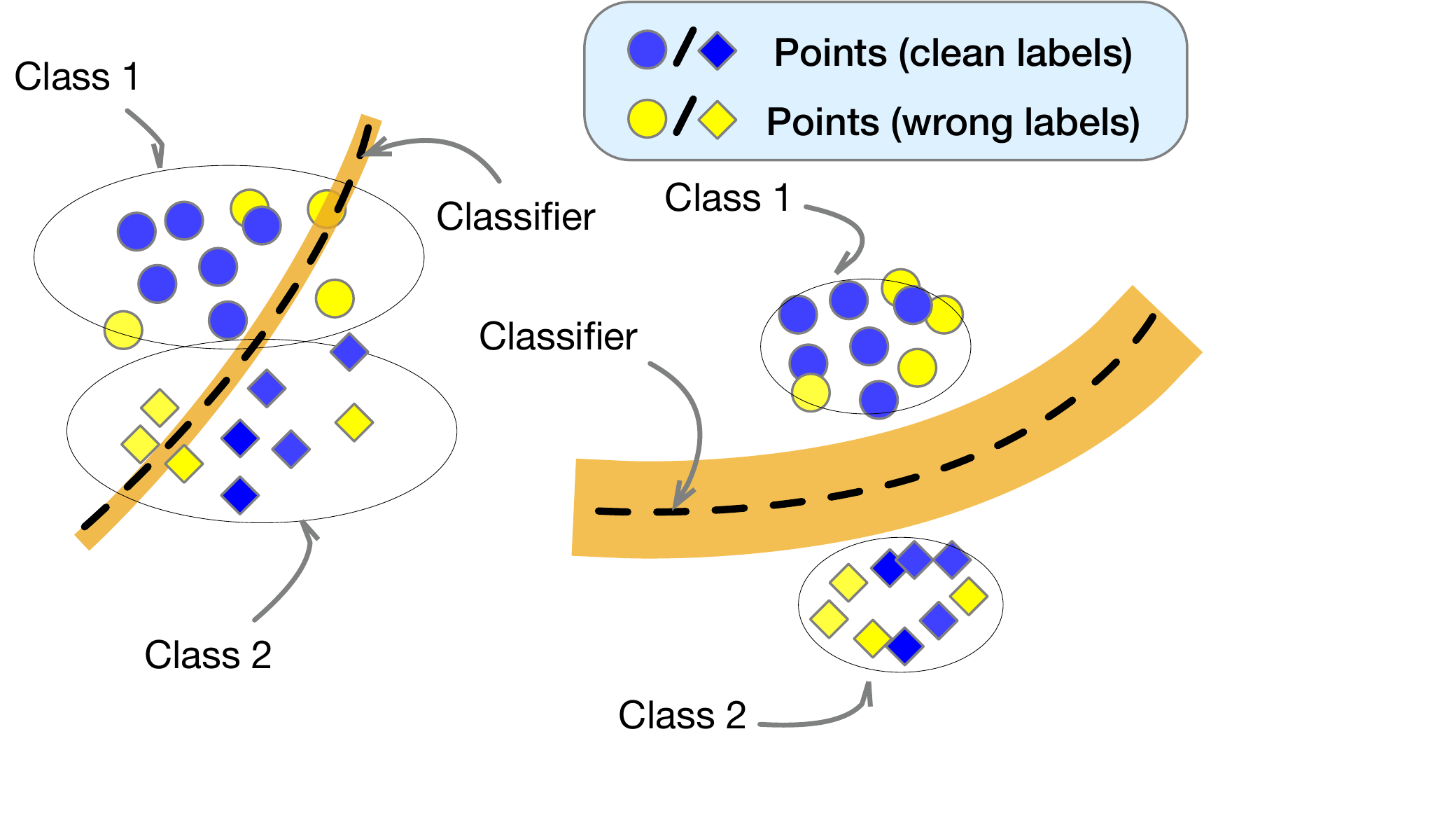}
    \caption{Illustration of the proposed method with noisy labels.
    Black curves are the best classifiers that are learned during training.
    \textbf{Left:} Deep networks without contrastive regularization.
    \textbf{Right:} Deep networks with contrastive regularization.  
    Two classes are better separated by deep networks that
    points with the same class are pulled into a tight cluster and 
    clusters are pushed away from each other.}
    \label{fig:ctrr}
\end{figure}

\section{Introduction}
The successes of deep neural networks \cite{DBLP:conf/cvpr/resnethe,fastrcnn} largely rely on availability of correctly labeled large-scale datasets that are prohibitively expensive and time-consuming to collect \cite{DBLP:conf/cvpr/XiaoXYHW15}.
Approaches to addressing this issue includes: acquiring labels from crowdsourcing-like platforms or non-expert labelers or  other unreliable sources \cite{ldmi, DBLP:conf/nips/ZhangS18} 
but while theses methods can reduce the labeling cost, label noise is inevitable.
Due to the over-parameterization of deep networks \cite{DBLP:conf/cvpr/resnethe}, examples with noisy labels can ultimately be memorized with a cross entropy loss\cite{DBLP:conf/nips/LiuNRF20, DBLP:conf/icml/ArpitJBKBKMFCBL17, DBLP:conf/nips/coresets}, which is known as the \emph{memorization effect} \cite{DBLP:conf/iclr/ZhangBHRV17, DBLP:conf/nips/MaennelATBBGK20}, leading to poor performance \cite{DBLP:conf/iclr/ZhangBHRV17}.
Therefore, it is important to develop methods that are robust to the label noise.

Cross entropy (CE) loss is widely used as a loss function for image classification tasks due to its strong performance on clean training data~\cite{DBLP:journals/corr/abs-2007-08199}
but it is not robust to label noise.
When labels in training data are corrupted, the performance drops 
\cite{DBLP:conf/nips/Ben-DavidBCP06, DBLP:journals/ml/Ben-DavidBCKPV10}.
Given the memorization effect of deep networks, training on noisy data with the CE loss results in the representations of the data clustered in terms of their noisy labels instead of the ground truth.
Thus, the final layer of the deep networks cannot find a good decision boundary from these noisy representations.

To overcome the memorization effect, noise-robust loss functions have been actively studied in the literature~\cite{DBLP:journals/tcyb/ManwaniS13, DBLP:conf/nips/ZhangS18, DBLP:conf/iccv/0001MCLY019, DBLP:conf/ijcai/FengSLL0020}.
They aim to design noise-robust loss functions in a way such that they achieve small loss on clean data and large loss on wrongly labeled data.
However, it has been empirically shown that being robust alone is not sufficient for a good performance as it also suffers from the \emph{underfitting} problem \cite{DBLP:conf/icml/MaH00E020}.
To address this issue, these noise-robust loss functions have to be explicitly or implicitly jointly used with the CE loss, which brings a trade-off between non-robust loss and robust loss.
As a result, the memorization effect is alleviated but still remains due to the non-robust CE loss.

In this paper, we tackle this problem from a different perspective. Specifically, we investigate contrastive learning and the effect of the clustering structure for learning with noisy labels.
Owing to the power of contrastive representation learning methods \cite{simsiam, byol, DBLP:conf/nips/ChenKSNH20, DBLP:conf/nips/KhoslaTWSTIMLK20, mocov2}, learning contrastive representations has been extensively applied on various tasks \cite{ctr_objectdetection, ctr_rl, ctr_videoaudio}.
The key component of contrastive learning is positive contrastive pair $(x_1,x_2)$.
Training a contrastive objective encourages the representations of $x_1,x_2$ to be closer.
In supervised classification tasks, correct positive contrastive pairs are formed by examples from the same class.
When label noise exists, defining contrastive pairs in terms of their noisy labels results in adverse effects.
Encouraging representations from different classes to be closer makes it even more difficult to separate  images of different classes.
Similar to our attempt to learn contrastive representations from noisy data, previous work has focused on reducing the adverse effects by re-defining contrastive pairs according to their pseudo labels \cite{Li_2021_ICCV, DBLP:journals/data/CiortanDP21, DBLP:conf/cvpr/0001L21, DBLP:conf/iclr/0001XH21}.
However, pseudo labels can be unreliable, and then wrong contrastive pairs are inevitable and can dominate the representation learning.

To address this issue, we propose a new contrastive regularization function that does not suffer from the adverse effects.
We theoretically investigate benefits of representations induced by the proposed contrastive regularization function from two aspects.
First, the representations of images  keep information related to true labels and discard information related to corrupted labels.
Second, we  theoretically  show that the classifier is hard to memorize corrupted labels given the learned representations, which demonstrates that our representations are robust to label noise.
Intuitively, learning such contrastive representations of data helps combat the label noise.
If data points are clustered tightly in terms of their true labels, then it makes the classifier hard to draw a decision boundary to separate the data in terms of their corrupted labels.
We illustrate this intuition in Figure \ref{fig:ctrr}.
Our main contributions are as follows.

\begin{itemize}
\setlength\itemsep{0em}
    \item We theoretically analyze the representations induced by the contrastive regularization function, showing that the representations keep information related to true labels and discard information related to corrupted labels. Moreover, we formally show that representations with insufficient corrupted label-related information are robust to label noise.
    \item We propose a novel algorithm over  data with noisy labels to learn contrastive representations, and provide gradient analysis to show that correct contrastive pairs can dominate the representation learning.
    \item We empirically show that our method can be applied with existing label correction techniques and noise-robust loss functions to further boost the performance.
    We conduct extensive experiments to demonstrate the efficacy of our method.
\end{itemize}

\section{Theoretical Analysis}  \label{sec:method}
In this section, we first introduce some notations and we then investigate the benefits of  representations learned by the contrastive regularization function.
\subsection{Preliminaries}
We use uppercases $X,Y,\dotsc$ to represent random variables, calligraphic letters $\mcal{X},\mathcal{Y},\dotsc$ to represent sample spaces, and lowercases $x,y,\dotsc$ to represent their realizations.
Let $X$ be input random variable and $Y$ be its true label.
We use $\tilde Y$ to denote the wrongly-labeled random variable that is not equal to $Y$. 
The entropy of the random variable $Y$ is denoted by $H(Y)$ and the mutual information
of $X$ and $Y$ is $I(X,Y)$.

Contrastive learning aims to learn representations of data that only the data from the same class have similar representations.
In this paper, we propose to learn the representations by introducing the following contrastive regularization function over all examples $\{(x_i,y_i)\}$ from $\mcal{X}\times \mcal{Y}$ and $y_i$ is the ground truth.
\begin{equation} \label{loss:ctr}
    \mcal{L}_\text{ctr}(x_i,x_j)=-\big(\langle\,\tilde q_i, \tilde z_j \rangle + \langle\,\tilde q_j, \tilde z_i \rangle \big)\mathbbm{1}\{y_i=y_j\},
\end{equation}
where $\tilde q_k=\frac{q_k}{\norm{q_k}_2}$ and $\tilde z_k=\frac{z_k}{\norm{z_k}_2}$.
Following SimSiam \cite{simsiam}, we define $q=h(f(x))$, $z=\texttt{stopgrad}(f(x))$, $f$ is an encoder network consisting of a backbone network and a projection MLP, and $h$ is a prediction MLP.
Minimizing Eq.~(\ref{loss:ctr}) on $\{(x_i,y_i),(x_j,y_j)\}$ pulls representations of $x_i$ and $x_j$ closer if $y_i=y_j$.
The designs of the stop-gradient operation and $h$ applied on representations are mainly to avoid trivial constant solutions.

\subsection{The Benefits of Representations Induced by Contrastive Regularization} \label{sec:thm}

We first relate the solutions that minimize Eq.~(\ref{loss:ctr}) to a mutual information $I(Z;X^+)=\iint p(z,x^+)\log{\frac{p(z|x^+)}{p(z)}} \diff x^+ \diff z$, where $z=f(x)$ and $x^+$ is from the same class as $x$.

\begin{theorem} \label{thm:relationship}
Representations $Z$ learned by minimizing Eq.~(\ref{loss:ctr}) maximizes the mutual information $I(Z;X^+)$.
\end{theorem}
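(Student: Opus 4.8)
The plan is to connect the cosine-similarity alignment that $\mcal{L}_\text{ctr}$ enforces to a variational expression for $I(Z;X^+)$, exploiting the fact that the normalization $\tilde z = z/\norm{z}_2$ forces every representation onto the unit hypersphere. On the sphere the natural likelihood model is the von Mises--Fisher (vMF) distribution, whose log-density is \emph{linear} in the inner product between a sample and its mean direction; this is precisely the quantity the regularizer manipulates, so the whole argument hinges on making that correspondence explicit.

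First I would decompose $I(Z;X^+) = H(Z) - H(Z\mid X^+)$ and rewrite the conditional term as $-H(Z\mid X^+)=\expt_{p(z,x^+)}[\log p(z\mid x^+)]$. I would then model the conditional by a vMF density with concentration $\kappa$ and mean direction equal to the normalized representation of a same-class anchor, so that $\log p(z\mid x^+)=\log C_d(\kappa)+\kappa\langle \tilde z, \tilde z^+\rangle$ for a dimension-dependent normalizer $C_d(\kappa)$. Substituting yields the exact identity $I(Z;X^+)=H(Z)+\log C_d(\kappa)+\kappa\,\expt_{p(z,x^+)}\langle \tilde z,\tilde z^+\rangle + (\text{a nonnegative KL gap})$, where the gap is $\expt_{x^+}D_{KL}$ between the true conditional and the vMF model. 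Thus, up to terms not controlled by alignment, the mutual information is an increasing affine function of the expected same-class cosine similarity.

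Second, I would identify this expected cosine similarity with the population version of $\mcal{L}_\text{ctr}$: summing the per-pair objective over all same-class pairs $\{(x_i,y_i),(x_j,y_j)\}$ and taking expectations reproduces $\expt_{p(z,x^+)}\langle \tilde z,\tilde z^+\rangle$, with $x^+$ ranging over examples sharing the anchor's label. The predictor $h$ and the \texttt{stopgrad} operation are what let me read $\langle \tilde q_i,\tilde z_j\rangle$ as an inner product against the vMF mean direction rather than a degenerate self-similarity; I would invoke the SimSiam mechanism so that $\tilde q$ tracks the conditional mean of the target $\tilde z$ over the positive set while \texttt{stopgrad} blocks the trivial collapse that would otherwise drive the solution. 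Hence minimizing $\mcal{L}_\text{ctr}$ maximizes $\expt\langle \tilde z,\tilde z^+\rangle$, and therefore maximizes the mutual-information expression above.

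The main obstacle is the marginal term $H(Z)$: unlike InfoNCE, $\mcal{L}_\text{ctr}$ carries no explicit negative/uniformity term, so one must argue that $H(Z)$ is not simultaneously driven toward the collapsed, low-entropy solution. I would address this by taking the prior $p(z)$ to be uniform on the hypersphere --- equivalently fixing $H(Z)$ at its maximal, constant value --- which is the standard modeling assumption for normalized embeddings and is made operational here by the stop-gradient/predictor pair that prevents representational collapse. Under this assumption $H(Z)$ and $\log C_d(\kappa)$ are constants and the KL gap vanishes when the vMF model fits the conditional, leaving $I(Z;X^+)$ strictly monotone in the alignment term, which closes the argument; making the non-collapse claim fully rigorous rather than merely invoking it is the delicate point I expect to require the most care.
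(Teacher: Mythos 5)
Your proposal is correct at the level of rigor the paper itself operates at, but it takes a genuinely different route. The paper's proof starts from the same decomposition $I(Z;X^+)=H(Z)-\expt_{X^+}\big[H(Z|X^+)\big]$, but then argues directly about the \emph{exact minimizer} of Eq.~(\ref{loss:ctr}): by Cauchy--Schwarz, $\expt_{X,X^+}[\mcal{L}_\text{ctr}(X,X^+)]\geq -2$, with equality iff $\tilde q=\tilde z^+$ and $\tilde q^+=\tilde z$ for all same-class pairs; chaining these equalities over triples $x_1,x_2,x_3$ from one class forces $f$ to be constant on each class, so $Z=f(X^+)$ is deterministic given $X^+$ and the conditional-entropy term attains its minimum. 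The marginal term $H(Z)$ is then handled exactly as you handle it --- by assuming, on the strength of SimSiam's empirical evidence, that collapse is avoided, so that on a balanced dataset the class clusters are distinct and equally likely and $H(Z)$ is maximal. Your von Mises--Fisher variational identity replaces the paper's equality-condition argument with the monotone bound $I(Z;X^+)\geq H(Z)+\log C_d(\kappa)+\kappa\,\expt\langle\tilde z,\tilde z^+\rangle$, which buys something the paper does not have: a quantitative link between the loss value and the mutual information \emph{throughout training}, not only at the exact optimum, in the spirit of InfoMax analyses. What it costs you is two steps the paper's route avoids. First, the loss aligns $\tilde q_i$ with $\tilde z_j$, not $\tilde z_i$ with $\tilde z_j$; the paper disposes of the predictor $h$ via the equality conditions (at the optimum $h\circ f$ and $f$ agree on each class), whereas your claim that $\tilde q$ tracks the conditional mean of $\tilde z$ is invoked rather than derived. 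Second, the gap-vanishing claim is not literally true at the optimum: there the conditional $p(z|x^+)$ is a point mass per class, which no finite-$\kappa$ vMF fits, so you must take a supremum over $\kappa$ (the bound then diverges as alignment tends to $1$, consistent with a deterministic $Z$ given $X^+$) or, as the paper implicitly does, treat $Z$ as effectively discrete over class clusters. Neither caveat is fatal --- the paper's own treatment of $H(Z)$ rests on an unproven non-collapse assumption just as yours does --- but both are points where your route requires explicit care that the equality-condition route sidesteps.
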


Theorem~\ref{thm:relationship} reveals the equivalence between the contrastive learning and mutual information maximization. Intuitively,  Eq.~(\ref{loss:ctr}) encourages to pull representations from the same class together and push those from different classes apart.
The estimate of $z$ conditioned on $x^+$ is more accurate than random guessing because the representation $z$ of $x$ is similar to the representation of $x^+$.
Thus the pointwise mutual information $\log{\frac{p(z|x^+)}{p(z)}}$ increases by minimizing Eq.~(\ref{loss:ctr}).

We denote $Z^\star = \argmax_{Z_\theta} I(Z_\theta, X^+)$ by the representation that maximizes the mutual information, where $Z_\theta$ is a representation of $X$ parameterized by the neural network $f$ with parameters $\theta$. 
To understand what $Z^\star$ is learned from inputs and to show that $Z^\star$ is noise-robust, we introduce the notion of \emph{$(\epsilon, \gamma)$-distribution}:

\begin{definition}[$(\epsilon, \gamma)$-distribution] \label{assump:a1}
A distribution $D(X,Y,\tilde Y)$ is called $(\epsilon, \gamma)$-Distribution if there exists $\gamma \gg \epsilon >0$ such that
\begin{equation} \label{eq:asump11}
    I(X;Y|X^+) \leq \epsilon,
\end{equation}
and
\begin{equation} \label{eq:asump2}
    I(X;\tilde Y|X^+) > \gamma.
\end{equation}
\end{definition}

Eq.~(\ref{eq:asump11}) characterizes the connection between images and their true labels.
If we already know an image $X^+$, then there is the limited extra information related to the true label by additionally knowing $X$.
We use a small number $\epsilon$ to restrict this additional information gain.
Eq.~(\ref{eq:asump2}) characterizes the connection between those images and their corrupted labels.
By knowing an additional image $X^+$, the information $X$ contains about its corrupted label $\tilde Y$ is still larger than $\gamma$.
The above condition $\gamma \gg \epsilon >0$ states that images from the same class are much more similar with respect to the true label than the corrupted label.
As it is mentioned in \cite{DBLP:conf/colt/SridharanK08}, if there is a perfect prediction of $Y$ given $X^+$, then $\epsilon=0$.

\begin{figure} [t]
    \centering
    \includegraphics[width=0.47\textwidth]{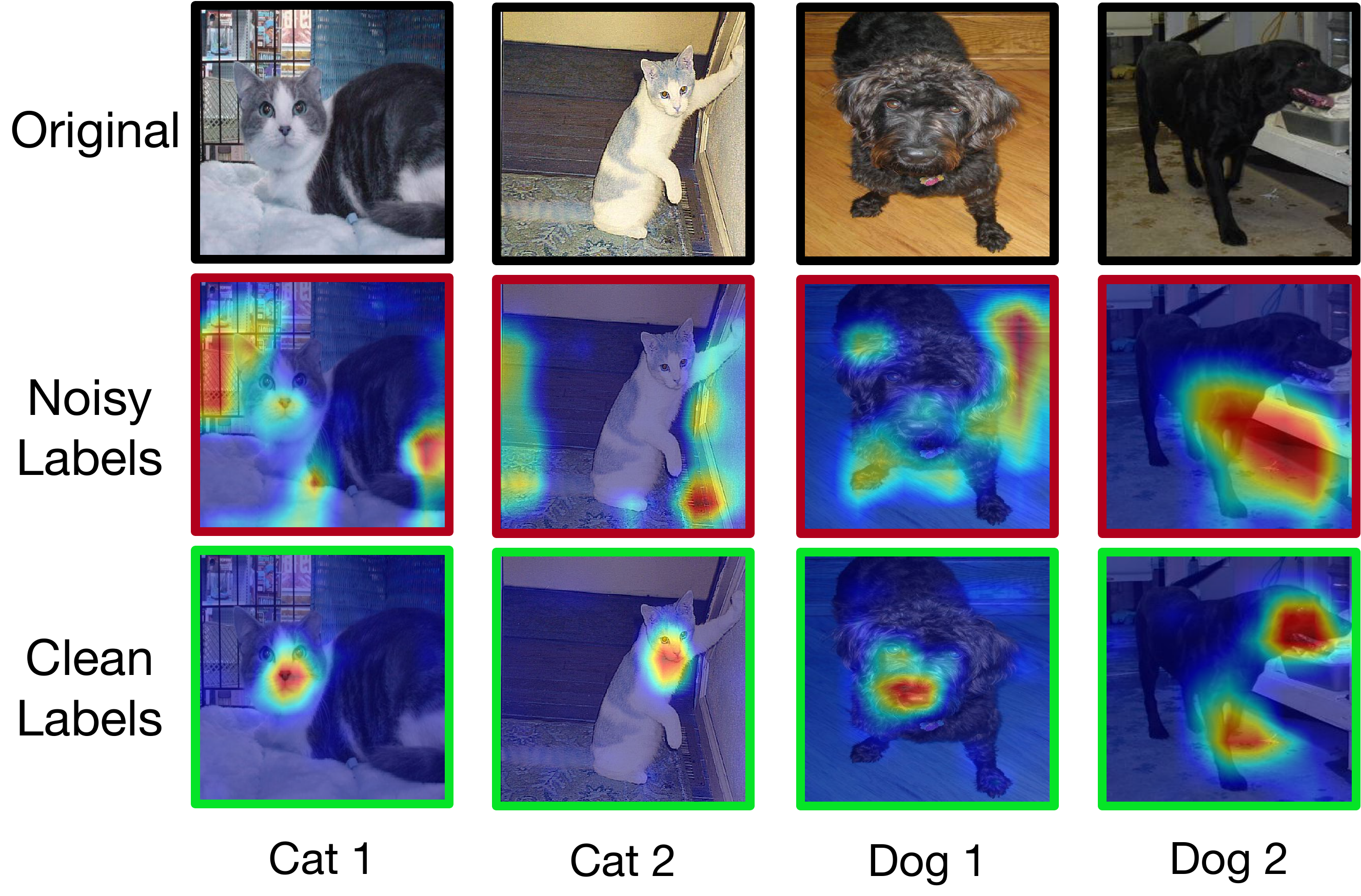}
    \caption{An example of Grad-CAM \cite{grad-cam} results of Resnet34 trained on noisy dataset with 40\% symmetric label noise and clean dataset, separately.
    When there is label noise, information related to corrupted labels captured by the model varies from image to image (e.g.\ window bars in Cat 1 v.s.\ floor and wall in Cat 2).
    When there is no label noise, information related to true labels are similar for images from the same class (e.g.\ cat face in Cat 1 v.s.\ cat face in Cat 2).}
    \label{fig:catdogs}
\end{figure}

We illustrate the intuitions behind Definition~\ref{assump:a1} in Figure~\ref{fig:catdogs}.
We use the Grad-CAM \cite{grad-cam} to highlight the important regions in the images for predictions.
The highlighted regions captured by the model are most related to labels.
For images with the same clean labels, their information related to true labels are similar.
For example, when Cat 1 and Cat 2 in Figure~\ref{fig:catdogs} are labeled as ``cat'', cat faces are captured as the true label-related information and they all look alike.
For images with corrupted labels, their information related to corrupted labels are quite different.
When Cat 1 and Cat 2 in Figure~\ref{fig:catdogs} are labeled as ``dog'',
the windows bars captured as the corrupted label-related information for Cat 1 is different from the floor and wall for Cat 2.

With the notion of $(\epsilon, \gamma)$-distribution, the following theorem help us understand the benefits of representations $Z^\star$ in depth.

\begin{theorem} \label{thm:1}
Given a distribution $D(X,Y,\tilde Y)$ that is $(\epsilon, \gamma)$-Distribution, we have 
\begin{align}
    &I(X;Y) - \epsilon  \leq I(Z^\star;Y) \leq I(X;Y), \label{eq2}\\
    &I(Z^\star;\tilde Y)  \leq I(X;\tilde Y) - \gamma + \epsilon. \label{eq3}
\end{align}
\end{theorem}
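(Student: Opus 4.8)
The plan is to treat $Z^\star$ as a sufficient statistic of $X$ for $X^+$ and to propagate the two conditions of Definition~\ref{assump:a1} through the chain rule for mutual information. By Theorem~\ref{thm:relationship}, $Z^\star$ maximizes $I(Z_\theta;X^+)$; since $X^+ - X - Z^\star$ is a Markov chain, the data processing inequality gives $I(Z^\star;X^+)\le I(X;X^+)$, so the maximizer attains equality and $X\perp X^+\mid Z^\star$. The same Markov property applied to $Y - X - Z^\star$ and $\tilde Y - X - Z^\star$ immediately yields the right-hand inequality $I(Z^\star;Y)\le I(X;Y)$ in~\eqref{eq2}. Throughout I will make explicit the two generative assumptions implicit in the setup and in Figure~\ref{fig:catdogs}: same-class samples are conditionally independent, $X\perp X^+\mid Y$, and corruption is instance-specific, $\tilde Y\perp X^+\mid X$.

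For the lower bound in~\eqref{eq2} I would expand $I(Z^\star;X^+,Y)$ in two orders. Conditional independence $X\perp X^+\mid Y$ forces $Z^\star\perp X^+\mid Y$, so $I(Z^\star;X^+,Y)=I(Z^\star;Y)$; expanding on $X^+$ first gives $I(Z^\star;X^+,Y)=I(Z^\star;X^+)+I(Z^\star;Y\mid X^+)\ge I(Z^\star;X^+)$. Combining with the sufficiency identity $I(Z^\star;X^+)=I(X;X^+)$ and the decomposition $I(X;X^+)=I(X;Y)-I(X;Y\mid X^+)$ (again from $X\perp X^+\mid Y$), the hypothesis $I(X;Y\mid X^+)\le\epsilon$ delivers $I(Z^\star;Y)\ge I(X;Y)-\epsilon$. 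This step is essentially the multi-view sufficiency estimate of \cite{DBLP:conf/colt/SridharanK08} and needs only sufficiency, not minimality, of $Z^\star$.

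For~\eqref{eq3} I would start from $I(Z^\star;\tilde Y)=I(X;\tilde Y)-I(X;\tilde Y\mid Z^\star)$ and aim to show $I(X;\tilde Y\mid Z^\star)\ge\gamma-\epsilon$. Expanding $I(X;\tilde Y,X^+\mid Z^\star)$ in two orders and using sufficiency $I(X;X^+\mid Z^\star)=0$ together with $I(X;X^+\mid\tilde Y,Z^\star)=0$ — the latter because $\tilde Y\perp X^+\mid X$ keeps the factorization $p(x,x^+\mid z)=p(x\mid z)\,p(x^+\mid z)$ intact after conditioning on $\tilde Y$ — collapses the identity to $I(X;\tilde Y\mid Z^\star)=I(X;\tilde Y\mid X^+)-I(Z^\star;\tilde Y\mid X^+)$. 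Since $I(X;\tilde Y\mid X^+)>\gamma$, it then remains only to control the residual term $I(Z^\star;\tilde Y\mid X^+)$.

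The main obstacle is exactly this residual bound $I(Z^\star;\tilde Y\mid X^+)\le\epsilon$. It cannot follow from sufficiency alone: the trivial maximizer $Z^\star=X$ is sufficient yet keeps all of $\tilde Y$ and violates~\eqref{eq3}, so the argument must invoke that the contrastive maximizer is \emph{minimal}, retaining no instance-specific information beyond what it shares with $X^+$. I would formalize this by bounding $I(Z^\star;\tilde Y\mid X^+)\le I(Z^\star;X\mid X^+)$ (from $\tilde Y - X - Z^\star$) and arguing that the minimal sufficient $Z^\star$ has $\epsilon$-small residual $I(Z^\star;X\mid X^+)$, tied to the true-label condition $I(X;Y\mid X^+)\le\epsilon$. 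Pinning down this minimality — rather than mere sufficiency — is the delicate point on which the whole second inequality turns; once it is in place, $I(X;\tilde Y\mid Z^\star)>\gamma-\epsilon$ gives $I(Z^\star;\tilde Y)<I(X;\tilde Y)-\gamma+\epsilon$, which is~\eqref{eq3}.
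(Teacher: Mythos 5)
Your reconstruction is sound up to the last step and matches the paper's skeleton more closely than you may realize: the upper bound $I(Z^\star;Y)\le I(X;Y)$ via the chain $Y\leftrightarrow X\rightarrow Z$ is the paper's first step verbatim, and your identity $I(X;\tilde Y\mid Z^\star)=I(X;\tilde Y\mid X^+)-I(Z^\star;\tilde Y\mid X^+)$ is exactly the paper's key decomposition $I(Z^\star;\tilde Y)=I(X;\tilde Y)-I(X;\tilde Y\mid X^+)+I(Z^\star;\tilde Y\mid X^+)$ rearranged. For the lower bound in the first claim you substitute an explicit multi-view assumption $X\perp X^+\mid Y$ where the paper instead asserts $I(Z^\star;X^+\mid Y)=I(X;X^+\mid Y)$ from maximality of $I(Z^\star;X^+)$; both add structure beyond Definition~\ref{assump:a1}, and your variant is acceptable (it is essentially the Sridharan--Kakade route the paper itself cites).

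The genuine gap is the residual bound $I(Z^\star;\tilde Y\mid X^+)\le\epsilon$, which you explicitly leave open and try to route through minimality of $Z^\star$. The paper needs no minimality here: conditioned on $X^+$, the data processing inequality along $\tilde Y\leftarrow Y\leftrightarrow X\rightarrow Z$ gives $I(Z^\star;\tilde Y\mid X^+)\le I(Z^\star;Y\mid X^+)$, and since $Z^\star$ is a deterministic function of $X$, a second conditional DPI gives $I(Z^\star;Y\mid X^+)\le I(X;Y\mid X^+)\le\epsilon$ by Definition~\ref{assump:a1}. Your alternative bound $I(Z^\star;\tilde Y\mid X^+)\le I(Z^\star;X\mid X^+)$ is a dead end: the objective only maximizes $I(Z;X^+)$ and places no penalty on $I(Z;X\mid X^+)$, so no minimality property is available---your own observation that $Z^\star=X$ is a legitimate maximizer shows that branch cannot close. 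Note also that your $Z^\star=X$ ``counterexample'' to the theorem only bites under your added assumption $\tilde Y\perp X^+\mid X$ of instance-specific corruption; under the paper's noise model the conditional DPI above applies to $Z=X$ as well and forces $I(X;\tilde Y\mid X^+)\le I(X;Y\mid X^+)\le\epsilon$, contradicting $I(X;\tilde Y\mid X^+)>\gamma$. So what you have actually uncovered is a tension between Definition~\ref{assump:a1} (which presumes instance-dependent corruption) and the Markov chain $\tilde Y\leftarrow Y\leftrightarrow X\rightarrow Z$ the paper invokes (label-dependent corruption), not a need for minimality; within the paper's stated assumptions, replacing your minimality step with the two conditional DPIs completes your proof immediately.
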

Given images $X$ and their labels $Y$, the mutual information $I(X;Y)$ is fixed.
The theorem states that the learned representations $Z^\star$ keep as much true label-related information as possible and  discard much corrupted label-related information.
Since the corrupted label-related information is discarded from the representations $Z^\star$, memorizing the corrupted labels based on $Z^\star$ is diminished.
Lemma~\ref{lem:1} establishes the lower bound on the expected error on \emph{wrongly-labeled} data.

\begin{figure*}[!h]
\includegraphics[width=\textwidth]{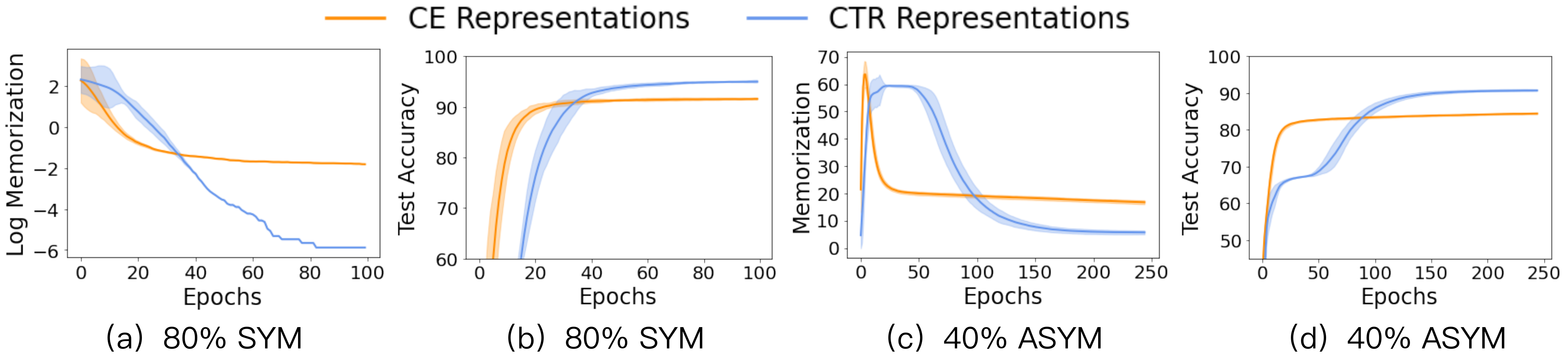}
\caption{Results of memorization of label noise and performance on test data on CIFAR-10 with 80\% symmetric label noise (SYM) and 40\% asymmetric label noise (ASYM).
The memorization is defined by the fraction of wrongly labeled examples whose predictions are equal to their labels.}
\label{fig:labelacc}
\end{figure*}

\begin{lemma} \label{lem:1}
Consider a pair of random variables $(X,\tilde Y)$.
Let $\hat Y$ be outputs of any classifier based on inputs $Z_\theta$,
and $\tilde e=\mathbbm{1}\{\hat Y \not= \tilde Y \}$, where $\mathbbm{1}\{A\}$ be the indicator function of event $A$. 
Then, we have
\begin{equation*}
    \expt[\tilde e] \geq \frac{H(\tilde Y)-I(Z_\theta;\tilde Y) -H(\tilde e)}{\log(|\mcal{\tilde Y}|) - 1}.
\end{equation*}
\end{lemma}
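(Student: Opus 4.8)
The plan is to recognize Lemma~\ref{lem:1} as a Fano-type inequality and to establish it by combining the classical Fano argument with the data processing inequality. The quantity $\expt[\tilde e]=\prob(\hat Y\neq\tilde Y)$ is exactly the error probability of predicting $\tilde Y$, and the right-hand side rearranges into the familiar Fano form, so the whole proof amounts to deriving Fano's bound for the pair $(\tilde Y,\hat Y)$ and then relating the prediction $\hat Y$ back to the representation $Z_\theta$.

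First I would set up the Markov structure. Since $\hat Y$ is produced by a classifier acting only on the input $Z_\theta$, the corrupted label, the representation, and the prediction form a Markov chain $\tilde Y \to Z_\theta \to \hat Y$; that is, $\hat Y$ depends on $\tilde Y$ only through $Z_\theta$. The data processing inequality then yields $I(\hat Y;\tilde Y)\leq I(Z_\theta;\tilde Y)$, and consequently
\begin{equation*}
    H(\tilde Y\mid \hat Y)=H(\tilde Y)-I(\hat Y;\tilde Y)\geq H(\tilde Y)-I(Z_\theta;\tilde Y).
\end{equation*}
Next I would derive Fano's inequality itself by expanding the joint conditional entropy $H(\tilde e,\tilde Y\mid \hat Y)$ in two ways. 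Because $\tilde e=\mathbbm{1}\{\hat Y\neq\tilde Y\}$ is a deterministic function of $(\tilde Y,\hat Y)$, one decomposition collapses to $H(\tilde Y\mid \hat Y)$; the other gives $H(\tilde e\mid \hat Y)+H(\tilde Y\mid \tilde e,\hat Y)\leq H(\tilde e)+H(\tilde Y\mid \tilde e,\hat Y)$. The crucial step is to bound the last term by conditioning on the value of $\tilde e$: when $\tilde e=0$ we have $\tilde Y=\hat Y$, so the conditional entropy vanishes, and when $\tilde e=1$ the variable $\tilde Y$ is confined to the remaining values, so its conditional entropy is at most $\log(|\mcal{\tilde Y}|-1)$. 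Averaging over $\tilde e$ produces $H(\tilde Y\mid \tilde e,\hat Y)\leq \expt[\tilde e]\log(|\mcal{\tilde Y}|-1)$, hence $H(\tilde Y\mid \hat Y)\leq H(\tilde e)+\expt[\tilde e]\log(|\mcal{\tilde Y}|-1)$.

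Finally I would chain the two inequalities,
\begin{equation*}
    H(\tilde Y)-I(Z_\theta;\tilde Y)\leq H(\tilde Y\mid \hat Y)\leq H(\tilde e)+\expt[\tilde e]\big(\log|\mcal{\tilde Y}|-1\big),
\end{equation*}
and solve for $\expt[\tilde e]$, using that the denominator is positive for $|\mcal{\tilde Y}|$ large enough, to recover the stated bound. \emph{The main obstacle} is the Fano derivation's counting step, namely arguing cleanly that on the event $\{\tilde e=1\}$ the label $\tilde Y$ is supported on a strictly smaller alphabet so that its conditional entropy is controlled by the size of that reduced set; I would also need to justify the Markov chain $\tilde Y\to Z_\theta\to \hat Y$ carefully, since it is what licenses replacing $I(\hat Y;\tilde Y)$ by the larger $I(Z_\theta;\tilde Y)$ and thus turns the exact Fano statement into the desired representation-level lower bound. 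I would note in passing that the bound is only informative when the numerator $H(\tilde Y)-I(Z_\theta;\tilde Y)-H(\tilde e)$ is positive, which is precisely the regime where $Z_\theta$ retains little information about $\tilde Y$.
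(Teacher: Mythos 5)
Your proof is correct and follows essentially the same route as the paper: both are Fano arguments combined with the Markov structure $\tilde Y \to Z_\theta \to \hat Y$ --- you invoke the data processing inequality to reduce to plain Fano on the pair $(\tilde Y,\hat Y)$, while the paper carries $Z_\theta$ through the conditioning and establishes $I(\tilde Y;Z_\theta,\hat Y)=I(\tilde Y;Z_\theta)$ via $I(\hat Y;\tilde Y\mid Z_\theta)=0$, which is the same fact in a slightly different packaging. One small remark: your Fano step correctly produces the denominator $\log(|\mcal{\tilde Y}|-1)$, which is also what the paper's own proof derives, so the parenthesization $\log(|\mcal{\tilde Y}|)-1$ appearing in the lemma statement and in your final chained display is a typo rather than the quantity either argument actually bounds by.
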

Lemma~\ref{lem:1} provides  a necessary condition on the success of learning with noisy labels based on representation learning and sheds new light on this problem by highlighting the role of minimizing $I(Z_\theta; \tilde{Y})$. To see this, note that small $I(Z_\theta; \tilde{Y})$ implies robustness to label noise since $\expt[\tilde e]$ is the expected error over the corrupted labels. On the other hand, when minimizing Eq.~(\ref{loss:ctr}), small $I(Z^\star; \tilde{Y})$ can be achieved as indicated by the upper bound in Eq.~(\ref{eq3}). In the meanwhile, the lower bound on $I(Z^\star;Y)$ in Eq.~(\ref{eq2}) also shows that $Z^\star$ can retain the discriminative information of the data to avoid a trivial solution to $I(Z_\theta; \tilde{Y})$ minimization (i.e., $Z_\theta$ is a constant representation).

While Lemma~\ref{lem:1} combined with Theorem~\ref{thm:1} indicates that $Z^\star$ is robust to label noise, the following Lemma  shows that $Z^\star$ can also avoid underfitting. Specifically, it implies that that a good classifier achieved under the clean distribution can also be achieved based on our representations $Z^\star$.

\begin{lemma} \label{lem:2}
Let $R(X)=\inf_{g} \expt_{X,Y}[\mcal{L}(g(X),Y)]$ be the minimum risk over the joint distribution $X\times Y$, where $\mcal{L}(p,y)=\sum_{i=1}^{\mcal{Y}}y^{(i)}\log p^{(i)}$ is a CE loss  and $g$ is a function mapping from input space to label space.
Let $R(Z^\star)=\inf_{g^\prime} \expt_{Z^\star,Y}[\mcal{L}(g^\prime(Z^\star),Y)]$ be the minimum risk over the joint distribution $Z^\star\times Y$ and $g^\prime$ maps from representation space to label space. Then,
    \begin{equation*}
        R(Z^\star) \leq R(X) + \epsilon.
    \end{equation*}
\end{lemma}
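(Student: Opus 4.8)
The plan is to reduce both minimum risks to conditional entropies and then invoke the lower bound on $I(Z^\star;Y)$ furnished by Theorem~\ref{thm:1}. The starting point is the classical characterization of the Bayes-optimal predictor under cross-entropy: treating $\mcal{L}$ as the standard CE loss (so that the risk is bounded below), the infimum over all unrestricted classifiers $g$ mapping into the probability simplex is attained by the true posterior $g^\star(\cdot)=p(Y\mid\cdot)$, and the resulting minimum risk equals the conditional entropy. Concretely, I would first establish the two identities $R(X)=H(Y\mid X)$ and $R(Z^\star)=H(Y\mid Z^\star)$ by minimizing the pointwise objective $-\sum_i p(y^{(i)}\mid x)\log g(x)^{(i)}$ over the simplex via Gibbs' inequality, and then taking the expectation over $X$ (respectively over $Z^\star$).

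With these identities in hand, the difference of risks becomes a difference of conditional entropies, which I would rewrite in terms of mutual information. Using $H(Y\mid X)=H(Y)-I(X;Y)$ and $H(Y\mid Z^\star)=H(Y)-I(Z^\star;Y)$, the marginal entropy $H(Y)$ cancels and I obtain $R(Z^\star)-R(X)=I(X;Y)-I(Z^\star;Y)$.

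The final step is to control this gap. Since $Z^\star=f(X)$ is a deterministic function of $X$, the data-processing inequality already gives $I(Z^\star;Y)\le I(X;Y)$, so the gap is nonnegative; the substance of the lemma is that it is also at most $\epsilon$. This follows directly from the lower bound $I(Z^\star;Y)\ge I(X;Y)-\epsilon$ in Eq.~(\ref{eq2}) of Theorem~\ref{thm:1}, which yields $I(X;Y)-I(Z^\star;Y)\le\epsilon$ and hence $R(Z^\star)\le R(X)+\epsilon$.

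I expect the main obstacle to be the first step rather than the arithmetic that follows: one must argue that the infimum defining $R(X)$ and $R(Z^\star)$ is genuinely achieved by the posterior and coincides with the conditional entropy. This requires the classifier class to be rich enough to represent (or approximate arbitrarily well) the true conditional distribution, so that the pointwise Gibbs' inequality argument is valid and the infimum is not taken over a restricted hypothesis set. Once this identification is justified, the remainder of the proof is a purely mechanical application of Theorem~\ref{thm:1}.
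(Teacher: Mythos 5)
Your proof is correct and follows essentially the same route as the paper: identify $R(Z^\star)=H(Y\mid Z^\star)=H(Y)-I(Z^\star;Y)$ and $R(X)=H(Y\mid X)=H(Y)-I(X;Y)$, then apply the lower bound $I(Z^\star;Y)\geq I(X;Y)-\epsilon$ from Theorem~\ref{thm:1}. The only difference is cosmetic: the paper cites the variational characterization $H(Y\mid Z^\star)=\inf_{g^\prime}\expt_{Z^\star,Y}[\mcal{L}(g^\prime(Z^\star),Y)]$ from prior work, while you derive it inline via the pointwise Gibbs' inequality argument over the simplex (correctly noting this needs the infimum to range over an unrestricted class of predictors, which it does in the lemma's statement).
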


To show the robustness and performance of the contrastive (CTR) representation $Z^\star$, we empirically compare it  to the representation learned by the CE loss.
We first use clean labels to train neural networks with different loss functions.
Then we initialize the parameters of the final linear classifier and fine-tune the linear layer with noisy labels.
We denote the memorization by the fraction of corrupted examples whose predictions are equal to their labels.
Figure~\ref{fig:labelacc} illustrates the improved performance and robustness in terms of test accuracy and reduced memorization with the CTR representation.

Conventionally, the memorization of label noise increases as the training progresses \cite{DBLP:conf/nips/LiuNRF20,DBLP:conf/iclr/XiaL00WGC21}.
We remark that previous memorization is observed and proved in \emph{over-parameterized} models, where the ratio of the number parameters and the sample size is around $220$.
In their settings, the fraction of examples memorized by the model will increase.
However, the memorization in our setting is measured on a linear classifier on top of frozen data representations, where the ratio of the number parameters and the sample size is around $0.1$,  which is \emph{under-parameterized}.
This explains why Figure~\ref{fig:labelacc} shows that the memorization decreases as the training progresses.

\section{Algorithm} \label{sec:alg}
In practice, as we are only given a noisy data set, we do not know if a label is clean or not. Consequently,  simply minimizing Eq.~(\ref{loss:ctr}) can lead to deteriorated performances. To see this, note that Eq.~(\ref{loss:ctr}) is activated only when  $\mathbbm{1}\{y_i=y_j\}=1$. Thus, two representations from different classes will be pulled together when there are  noisy labels.

Since deep networks first fit examples with clean labels  and the probabilistic outputs of these examples are higher than examples with corrupted labels \cite{DBLP:conf/aistats/LiSO20, DBLP:conf/icml/ArpitJBKBKMFCBL17},
one straightforward approach to tackle this issue is to replace the indicator function with a more reliable criterion $\mathbbm{1}\{p_i^\top p_j \geq \tau\}$:
\begin{equation} \label{eq:ori_noisectr}
    \mcal{L}^\prime_\text{ctr}(x_i,x_j) = -\big(\langle\,\tilde q_i, \tilde z_j \rangle + \langle\,\tilde q_j, \tilde z_i \rangle \big)\mathbbm{1}\{p_i^\top p_j \geq \tau\},
\end{equation}
where $p_i$ is the probabilistic output produced by linear classifier on the representation of image $x_i$ and $\tau$ is a confidence threshold.
However, minimizing Eq.~(\ref{eq:ori_noisectr}) only helps representation learning during the early stage.
After that period, examples with corrupted labels will dominate the learning procedure since the magnitudes of gradient from correct contrastive pairs overwhelm that from wrong contrastive pairs.
In particular, given two clean examples $x_i,x_j$ with $y_i=y_j$ and a wrongly labeled example $x_m$ with $\tilde y_m=y_i=y_j$, during the early stage, representations $\tilde q_i^\top \tilde q_j \rightarrow 1$ and $\tilde q_i^\top \tilde q_m \approx 0$.
After the early stage, deep networks starts to fit wrongly labeled data.
At this moment, the wrong contrastive pairs $(x_i, x_m)$ and $(x_j, x_m)$ are wrongly pulled together and they impair  the representation learning instead of the correct pair $(x_i, x_j)$:
\begin{align} 
    &\norm{\frac{\partial \mcal{L}^\prime_\text{ctr}(x_i,x_m)}{\partial q_i}}_2^2
    =c_i(\underbrace{1-\tilde q_i^\top \tilde q_m }_{\approx 1}) \label{eq:gradcompare} \\
    &\gg c_i(\underbrace{1-\tilde q_i^\top \tilde q_j}_{\approx 0})=\norm{\frac{\partial \mcal{L}^\prime_\text{ctr}(x_i,x_j)}{\partial q_i}}_2^2, \nonumber
\end{align}
where $c_i=1/\norm{q_i}_2^2$ and we take $h$ as an identity function for simplicity. The proof is shown in supplementary materials.

To address this issue, we propose the following regularization function to avoid the negative effects from wrong contrastive pairs:
\begin{equation} \label{loss:noisyctr}
\begin{aligned}
&\mcal{\widetilde L}_\text{ctr}(x_i,x_j)= \\
&\bigg(\log{\big(1- \langle\,\tilde q_i, \tilde z_j \rangle \big)} +
     \log{\big(1- \langle\,\tilde q_j, \tilde z_i \rangle \big)} \bigg)\mathbbm{1}\{p_i^\top p_j \geq \tau\}
\end{aligned}
\end{equation}
Eq.~(\ref{loss:noisyctr}) still aims to learn similar representations for data with the same true labels.
Since the maximum  of Eq.~(\ref{loss:noisyctr}) is the same as that of Eq.~(\ref{loss:ctr}), our theoretical results about $Z^\star$ still hold.
Moreover, the gradient analysis of Eq.~(\ref{loss:noisyctr}) is given by
\begin{align} \label{eq:grad}
    \norm{\frac{\partial \mcal{\widetilde L}_\text{ctr}(x_i,x_j)}{\partial q_i}}_2^2 =c_i(1+\tilde q_i^\top \tilde q_j),
\end{align}
which indicates that the gradient in L2 norm increases if $\tilde q_i$ and $\tilde q_j$ approach to each other.
In other words, the gradient from the correct pair $(x_i, x_j)$ is larger than the gradient from the wrong pair $(x_i, x_m)$ ($1+\tilde q_i^\top \tilde q_j > 1+\tilde q_i^\top \tilde q_m \approx 1$) during the learning procedure.
Compared to the gradient given by Eq.~(\ref{eq:gradcompare}), our proposed regularization function does not suffer from the gradient domination by wrong pairs.
Meanwhile, the model does not overfit clean examples even though the gradients of Eq.~(\ref{loss:noisyctr}) from correct pairs are larger than wrong pairs.
As Eq.~(\ref{eq:gradcompare}) describes the gradient with respect to the representation, its magnitude can be viewed as the strength of pulling clean examples from the same class closer, which is \emph{not} directly related to overfitting to clean examples. Moreover, we use a separate linear layer on top of the representations as the classifier, thus as long as the gradients of the classification loss with respect to the parameters in the linear layer are not large on the clean examples, the model would not overfit to them.

Finally, the overall objective function is given by
\begin{equation} \label{eq:finalloss}
    \mcal{L}=\mcal{L}_\text{ce} + \lambda \mcal{\widetilde L}_\text{ctr}, 
\end{equation} 
where $\mcal{\widetilde L}_\text{ctr}$ serves as a contrastive regularization (CTRR) on representations and $\lambda$ controls the strength of the regularization.

\section{Experiments}
{\bf Datasets. }We evaluate our method on two artificially corrupted datasets CIFAR-10 \cite{cifar} and CIFAR-100 \cite{cifar}, and two real-world datasets ANIMAL-10N \cite{DBLP:conf/icml/SongK019} and Clothing1M \cite{clothing1m}.
CIFAR-10 and CIFAR1-00 contain $50,000$ training images and $10,000$ test images with $10$ and $100$ classes, respectively.
ANIMAL-10N has 10 animal classes and  $50,000$ training images with confusing appearances and $5000$ test images.
Its estimated noise level is around 8\%.
Clothing1M has a million training images and  $10,000$ test images with $14$ classes.
Its estimated noise level is around 40\%.

{\bf Noise Generation. }For CIFAR-10, we consider two different types of synthetic noise with various noise levels.
For symmetric noise, each label has the same probability of flipping to any other classes, and we randomly choose $r$ training data with their labels to be flipped for $r \in \{20\%, 40\%, 60\%, 80\%, 90\%\}$.
For asymmetric noise, following \cite{noiseagainstnoise}, we flip labels between TRUCK$\rightarrow$AUTOMOBILE, BIRD$\rightarrow$AIRPLANE, DEER$\rightarrow$HORSE, and CAT$\leftrightarrow$DOG.
we randomly choose $40\%$ training data with their labels to be flipped according to the asymmetric labeling rule.
For CIFAR-100, we also consider two different types of synthetic noise with various noise levels.
The generation for symmetric label noise is the same as that for CIFAR-10 with the noise level $r \in \{20\%, 40\%, 60\%, 80\%\}$.
To generate asymmetric label noise, we randomly sample $40\%$ data and flip their labels to the next classes.

\begin{table*}[ht]
    \centering
    \resizebox{\textwidth}{!}{
    \begin{tabular}{c|cccccc|c}
    \toprule
         \multirow{3}{*}{Method}  &  \multicolumn{7}{c}{CIFAR-10} \\ \cline{2-8}
         &\multicolumn{6}{c|}{Sym.} & \multicolumn{1}{c}{Asym.} \\
        & 0\% &20\% & 40\% & 60\% & 80\%  &90\%& 40\% \\
         \hline\hline
        CE & $93.97_{\pm 0.22}$ &$88.51_{\pm 0.17}$ &$82.73_{\pm 0.16}$ &$76.26_{\pm 0.29}$ &$59.25_{\pm 1.01}$ &$39.43_{\pm 1.17}$ &$83.23_{\pm 0.59}$  \\
        Forward & $93.47_{\pm 0.19}$ &$88.87_{\pm 0.21}$ &$83.28_{\pm 0.37}$ &$75.15_{\pm 0.73}$ &$58.58_{\pm 1.05}$ &$38.49_{\pm 1.02}$ &$82.93_{\pm 0.74}$  \\
        GCE &$92.38_{\pm 0.32}$ &$91.22_{\pm 0.25}$ &$89.26_{\pm 0.34}$ &$85.76_{\pm 0.58}$ &$70.57_{\pm 0.83}$ &$31.25_{\pm 1.04}$ &$82.23_{\pm 0.61}$  \\
        Co-teaching &  $93.37_{\pm 0.12}$ &$92.05_{\pm 0.15}$ &$87.73_{\pm 0.17}$ &$85.10_{\pm 0.49}$ &$44.16_{\pm 0.71}$ &$30.39_{\pm 1.08}$ &$77.78_{\pm 0.59}$  \\
        LIMIT & $93.47_{\pm 0.56}$ &$89.63_{\pm 0.42}$ &$85.39_{\pm 0.63}$ &$78.05_{\pm 0.85}$ &$58.71_{\pm 0.83}$ &$40.46_{\pm 0.97}$ &$83.56_{\pm 0.70}$  \\
        SLN &$93.21_{\pm 0.21}$ &$88.77_{\pm 0.23}$ &$87.03_{\pm 0.70}$ &$80.57_{\pm 0.50}$ &$63.99_{\pm 0.79}$ &$36.64_{\pm 1.77}$ &$81.02_{\pm 0.25}$  \\
        SL &$94.21_{\pm 0.13}$ &$92.45_{\pm 0.08}$ &$89.22_{\pm 0.08}$ &$84.63_{\pm 0.21}$ &$72.59_{\pm 0.23}$ &$51.13_{\pm 0.27}$ &$83.58_{\pm 0.60}$  \\
        APL & $93.97_{\pm 0.25}$ &$92.51_{\pm 0.39}$ &$89.34_{\pm 0.33}$ &$85.01_{\pm 0.17}$ &$70.52_{\pm 2.36}$ &$49.38_{\pm 2.86}$ &$84.06_{\pm 0.20}$  \\
        \midrule
        CTRR &$\mathbf{94.29_{\pm 0.21}}$&$\mathbf{93.05_{\pm 0.32}}$ &$\mathbf{92.16_{\pm 0.31}}$ &$\mathbf{87.34_{\pm 0.84}}$ &$\mathbf{83.66_{\pm 0.52}}$ &$\mathbf{81.65_{\pm 2.46}}$ &$\mathbf{89.00_{\pm 0.56}}$  \\
        \bottomrule
    \end{tabular}
    }
    \caption{Test accuracy on CIFAR-10 with different noise types and noise levels.
    All method use the same model PreAct ResNet18 and their best results are reported over three runs.}
    \label{tab:cifar}
\end{table*}

\begin{table*}[ht]
    \centering
    \resizebox{0.9\textwidth}{!}{
  \begin{tabular}{c|ccccc|c}
    \toprule
         \multirow{3}{*}{Method}  &  \multicolumn{5}{c}{CIFAR-100} \\ \cline{2-7}
         &\multicolumn{5}{c|}{Sym.} & \multicolumn{1}{c}{Asym.} \\
        & 0\% &20\% & 40\% & 60\% & 80\% & 40\%\\
         \hline\hline
        CE & $73.21_{\pm 0.14}$ &$60.57_{\pm 0.53}$ &$52.48_{\pm0.34}$ &$43.20_{\pm 0.21}$ &$22.96_{\pm 0.84}$&$44.45_{\pm 0.37}$    \\
        Forward & $73.01_{\pm 0.33}$ &$58.72_{\pm 0.54}$ &$50.10_{\pm 0.84}$ &$39.35_{\pm 0.82}$ &$17.15_{\pm 1.81}$& -   \\
        GCE &$72.27_{\pm 0.27}$ &$68.31_{\pm 0.34}$ &$62.25_{\pm 0.48}$ &$53.86_{\pm 0.95}$ &$19.31_{\pm 1.14}$ &$46.50_{\pm 0.71}$   \\
        Co-teaching &  $73.39_{\pm 0.27}$ &$65.71_{\pm 0.20}$ &$57.64_{\pm 0.71}$ &$31.59_{\pm 0.88}$ &$15.28_{\pm 1.94}$ & - \\
        LIMIT & $65.53_{\pm 0.91}$ &$58.02_{\pm 1.93}$ &$49.71_{\pm 1.81}$ &$37.05_{\pm 1.39}$ &$20.01_{\pm 0.11}$ & -   \\
        SLN &$63.13_{\pm 0.21}$ &$55.35_{\pm 1.26}$ &$51.39_{\pm 0.48}$ &$35.53_{\pm 0.58}$ &$11.96_{\pm 2.03}$ & -    \\
        SL &$72.44_{\pm 0.44}$ &$66.46_{\pm 0.26}$ &$61.44_{\pm 0.23}$ &$54.17_{\pm 1.32}$ &$34.22_{\pm 1.06}$ &$46.12_{\pm 0.47}$   \\
        APL & $73.88_{\pm 0.99}$ &$68.09_{\pm 0.15}$ &$63.46_{\pm 0.17}$ &$53.63_{\pm 0.45}$ &$20.00_{\pm 2.02}$&$52.80_{\pm 0.52}$  \\
        \midrule
        CTRR  &$\mathbf{74.36_{\pm 0.41}}$ &$\mathbf{70.09_{\pm 0.45}}$ &$\mathbf{65.32_{\pm 0.20}}$ &$\mathbf{54.20_{\pm 0.34}}$ &$\mathbf{43.69_{\pm 0.28}}$ &$\mathbf{54.47_{\pm 0.37}}$  \\ 
        \bottomrule
    \end{tabular}
    }
    \caption{Test accuracy on CIFAR-100 with different noise levels.
    All method use the same model PreAct ResNet18 and their best results are reported over three runs.}
    \vspace{-10 pt}
    \label{tab:cifar2}
\end{table*}

\begin{table}
    \centering
    \resizebox{0.42\textwidth}{!}{
    \begin{tabular}{c|c|c}
    \toprule
      Method & ANIMAL-10N& Clothing1M \\
      \hline
      CE & $83.18_{\pm 0.15}$ & $70.88_{\pm 0.45}$ \\
      Forward & $83.67_{\pm 0.31}$ & $71.23_{\pm 0.39}$\\
      GCE & $84.42_{\pm 0.39}$ & $71.34_{\pm 0.12}$\\
      Co-teaching & $85.73_{\pm 0.27}$ & $71.68_{\pm 0.21}$\\
      SLN & $83.17_{\pm 0.08}$ & $71.17_{\pm 0.12}$\\
      SL & $83.92_{\pm 0.28}$ & $72.03_{\pm 0.13}$\\
      APL & $84.25_{\pm 0.11}$ & $72.18_{\pm 0.21}$\\
      \midrule
      CTRR & $\mathbf{86.71_{\pm 0.15}}$ & $\mathbf{72.71_{\pm 0.19}}$\\
        \bottomrule
    \end{tabular}
    }
    \caption{Test accuracy on the real-world datasets ANIMAL-10N and Clothing1M.
    The results are obtained based on three different runs.}
        \vspace{-10 pt}
    \label{tab:animal}
\end{table}

{\bf Baseline methods. } To evaluate our method, we mainly compare our robust loss function to other robust loss function methods: 1) CE loss. 2) Forward correction \cite{DBLP:conf/cvpr/PatriniRMNQ17}, which corrects loss values by a estimated noise transition matrix.  3) GCE \cite{DBLP:conf/nips/ZhangS18}, which takes  advantages of both MAE loss and CE loss and designs a robust loss function.  4) Co-teaching \cite{DBLP:conf/nips/HanYYNXHTS18}, which maintains two networks and uses small-loss examples to update.  5) LIMIT \cite{DBLP:conf/icml/HarutyunyanRSG20}, which introduces noise to gradients to avoid memorization. 6) SLN \cite{noiseagainstnoise}, which adds Gaussian noise to noisy labels to combat label noise. 7) SL  \cite{DBLP:conf/iccv/0001MCLY019}, which uses CE loss and a reverse cross entropy loss (RCE) as a robust loss function. 8) APL (NCE+RCE) \cite{DBLP:conf/icml/MaH00E020}, which combines two mutually boosted robust loss functions for training.

{\bf Implementation details. }  We use a PreAct Resnet18 as the encoder for CIFAR datasets, and Resnet18 as the encoder for the two real-world datasets.
The project MLP and the prediction MLP are the same for all encoders.
Following SimSiam \cite{simsiam}, the projection MLP consists of $3$ layers which have $2048$  hidden dimensions and output $2048$-dimensional embeddings.
The prediction MLP consists of $2$ layers which have $512$ hidden dimensions and output $2048$-dimensional embeddings.
Following \cite{DBLP:conf/nips/ChenKSNH20}, we apply strong augmentations to learn data representations, where the strong augmentation includes Gaussian blur, color distortion, random flipping and random cropping.
We use weak augmentations to optimize the cross-entropy loss, which includes random flipping and random cropping.
More implementation details can be found in supplementary materials.

\begin{table*}[!ht]
    \centering
    \resizebox{\textwidth}{!}{
    \begin{tabular}{c|cccccc}
    \toprule
          \multirow{2}{*}{Regularization Functions}  & \multicolumn{6}{c}{CIFAR-10}  \\ 
          &0\% & 20\% & 40\% & 60\%  & 80\%& 90\% \\
         \hline\hline
        $\mcal{L}^\prime_\text{ctr}$(\ref{eq:ori_noisectr}) & $93.58_{\pm 0.11}$   & $86.05_{\pm 0.33}$   & $82.34_{\pm 0.25}$ &$74.35_{\pm 0.54}$ &$54.83_{\pm 1.00}$ &$40.96_{\pm 0.99}$ \\
        $\mcal{\widetilde L}_\text{ctr}$(\ref{loss:noisyctr}) &$\mathbf{94.29_{\pm 0.21}}$&$\mathbf{93.05_{\pm 0.32}}$ &$\mathbf{92.16_{\pm 0.31}}$ &$\mathbf{87.34_{\pm 0.84}}$ &$\mathbf{83.66_{\pm 0.52}}$ &$\mathbf{81.65_{\pm 2.46}}$  \\
        \bottomrule
    \end{tabular}
    }
    \caption{The performance of the model with respect to different regularization functions.}
    \label{abla:ctrloss} 
\end{table*}

\begin{table*}[!ht]
    \centering
    \resizebox{0.9\textwidth}{!}{
    \begin{tabular}{c|ccccc}
    \toprule
         \multirow{2}{*}{Contrastive Frameworks}  & \multicolumn{5}{c}{CIFAR-10}  \\ 
           & 20\% & 40\% & 60\%  & 80\%& 90\% \\
         \hline  \hline
        CTRR (SimSiam) &${93.05_{\pm 0.32}}$ &${92.16_{\pm 0.31}}$ &${87.34_{\pm 0.84}}$ &${83.66_{\pm 0.52}}$ &${81.65_{\pm 2.46}}$  \\
        CTRR (SimCLR) & $92.50_{\pm 0.35}$ &$90.12_{\pm 0.43}$ &$87.41_{\pm 0.83}$ &$84.96_{\pm 0.44}$ &$79.57_{\pm 1.32}$  \\
        CTRR (BYOL) &$93.31_{\pm 0.16}$ &$92.12_{\pm 0.16}$ &$88.71_{\pm 0.52}$ &$86.99_{\pm 0.59}$ &$84.31_{\pm 0.66}$  \\
        \bottomrule
    \end{tabular}
    }
    \caption{Extending our method to other contrasitve learning frameworks.}
    \vspace{-12 pt}
    \label{tab:ssl}
\end{table*}

\subsection{CIFAR Results}
Table \ref{tab:cifar} and Table \ref{tab:cifar2} show the results on CIFAR-10 and CIFAR-100 with various label noise settings.
We use PreAct Resnet18 \cite{DBLP:conf/cvpr/resnethe} for all methods and report the best test accuracy for them based on three runs.
Our method achieves the best performance on all tested noise settings.
The improvement is more substantial when the noise level is higher.
Especially when noise levels reach to $80\%$ or even $90\%$, our method significantly outperforms other methods.
For example, on CIFAR-10 with $r=90\%$, CTRR maintains a high accuracy of $81.65\%$, whereas the second best one is $49.65\%$. 

\subsection{ANIMAL-10N \& Clothing1M Results}
Table \ref{tab:animal} shows the results on the real-world datasets ANIMAL-10N and Clothing1M.
All methods use the same model and the best results are reported over three runs.
In order to be consistent with previous works for a fair comparison, we use a random initialized Resnet18 and an ImageNet pre-trained Resnet18 on ANIMAL-10N and Clothing1M, respectively, and the best results are reported over three runs. 
For Clothing1M, following \cite{DBLP:conf/iclr/LiSH20, noiseagainstnoise}, we randomly sample a balanced subset of $20.48$K images from the noisy training data and report performance on $10$K test images.
Our method is superior to other baselines on the two real-world datasets.

\begin{figure}[!t] 
\begin{subfigure}{.25\textwidth}
  \includegraphics[width=\textwidth]{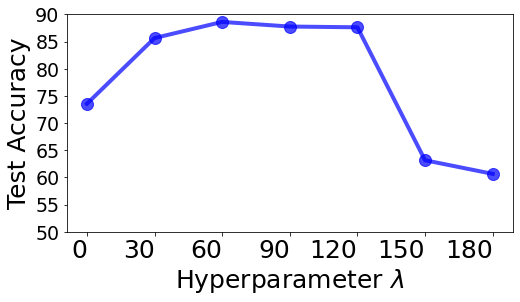}
\end{subfigure}%
\begin{subfigure}{.25\textwidth}
  \includegraphics[width=\textwidth]{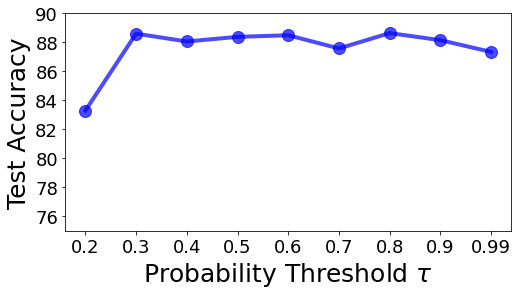}
\end{subfigure}
\caption{Analysis of $\lambda$ and $\tau$ on CIFAR-10 with 60\% symmetric label noise.}
\vspace{-10 pt}
\label{fig:abla}
\end{figure}

\section{Ablation Studies and Discussions} \label{sec:abla}
\subsection{Effects of hyperparameters}
The hyperparameter $\lambda$ controls the strength of the regularization to representations of data. 
A weak regularization is not able to address the memorization issue,
while a strong regularization makes the neural network mainly focus on optimizing the regularization term and ignoring optimizing the linear classifier.
Figure \ref{fig:abla} (left) shows the test accuracy with different $\lambda$.
The results are in line with the expectation that too strong or too weak regularizations leads to poor performance.

The $\tau$ is the confidence threshold for choosing two examples from the same classes.
When the score for the two examples exceeds the threshold, the two examples are considered as the correct pair.
Many wrong pairs are selected if $\tau$ is set too low.
Figure \ref{fig:abla} (right) shows the test accuracy with different $\tau$.
When we are always confident about any pairs ($\tau$=0), the model performance is reduced significantly ($\sim 20\%$).

\subsection{Effects of regularization functions}

To study the effect of the proposed regularization function,  
we compare the performance of Eq.~(\ref{loss:noisyctr}) to Eq.~(\ref{eq:ori_noisectr}).
Empirical results are consistent with the previous gradient analysis and they are shown in Table \ref{abla:ctrloss}.
Our proposed regularization function Eq.~(\ref{loss:noisyctr}) outperforms Eq.~(\ref{eq:ori_noisectr}) by a large margin across all noise levels.
Thus, learning data representations with Eq.~(\ref{loss:noisyctr}) can avoid wrong pairs dominating the representation learning.

\subsection{Other contrasitve learning frameworks}
Since the InfoMax principle \cite{oord2019representation} of contrastive learning and the gradient analysis can apply to other contrastive learning frameworks, we apply CTRR to other contrastive learning frameworks.
Table \ref{tab:ssl} shows that our principle is not limited to the SimSiam framework but can also be applied on other contrastive learning frameworks.
Since BYOL leverages an additional exponential moving average model to learn representations, the performance of CTRR with BYOL performs better, compared with SimSiam.
CTRR works slightly worse under SimCLR than the other two frameworks.
For its implementation, we simply replace the inner product of positive representations in SimCLR with our regularization function Eq.~(\ref{loss:noisyctr}) and keep the SimCLR objective function from negative pairs the same.
A study on how negative pairs from SimCLR affects representation learning in presence of the label noise is beyond the scope of this paper.

\subsection{Combination with other methods}
Furthermore, CTRR is orthogonal to label correction techniques~\cite{DBLP:conf/iclr/ZhangZW0021, DBLP:conf/nips/LiuNRF20}. In other words, our method can be integrated with these techniques to further boost learning performances. 
Specifically, we use the basic label correction strategy following \cite{noiseagainstnoise} that labels are replaced by weighted averaged of both model predictions and original labels,
where weights are scaled sample losses.
In Table \ref{abla:labelcorrection}, we show that the performance is improved after enabling a simple label correction technique.

Note that GCE \cite{DBLP:conf/nips/ZhangS18} is a partial noise-robust loss function implicitly combined with CE and MAE.
It is of interest to re-validate the loss function GCE along with our proposed regularization function.
We show the performance of a combination of our method and GCE \cite{DBLP:conf/nips/ZhangS18} in Table \ref{abla:rep_gce}.
With representations induced by our proposed method,  there is a significant improvement on GCE, which demonstrates the effectiveness of the learned representations.
Meanwhile, the success of this combination implies that our proposed method is beneficial to other partial noise-robust loss functions.

\begin{table}[t]
    \centering
    \resizebox{0.5\textwidth}{!}{
    \begin{tabular}{c|cccc}
    \toprule
          \multicolumn{1}{c|}{Label Correction} & \multicolumn{4}{c}{CIFAR-10}  \\ 
         Technique & 20\% & 40\% & 60\%  & 80\% \\
         \hline\hline
        \xmark & $93.05_{\pm 0.32}$ &$92.16_{\pm 0.31}$ &$87.34_{\pm 0.84}$ &$83.66_{\pm 0.52}$ \\
        \cmark & $\mathbf{93.32_{\pm 0.11}}$ &$ \mathbf{92.76_{\pm 0.67}}$ &$\mathbf{89.23_{\pm 0.18}}$ &$\mathbf{85.40_{\pm 0.93}}$ \\
        \bottomrule
    \end{tabular}
    }
    \caption{\cmark/\xmark\ indicates the label correction technique is enabled/disabled.}
    \label{abla:labelcorrection}
\end{table}

\begin{table}[t]
    \centering
    \resizebox{0.5\textwidth}{!}{
    \begin{tabular}{c|cccc}
    \toprule
          \multirow{2}{*}{Method} & \multicolumn{4}{c}{CIFAR-10}  \\ 
          & 20\% & 40\% & 60\%  & 80\% \\
         \hline\hline
         GCE & $91.22_{\pm 0.25}$ &$89.26_{\pm 0.34}$ &$85.76_{\pm 0.58}$ &$70.57_{\pm 0.83}$ \\
        CTRR & $93.05_{\pm 0.32}$ &$92.16_{\pm 0.31}$ &$87.34_{\pm 0.84}$ &$83.66_{\pm 0.52}$ \\
        CTRR+GCE & $\mathbf{93.94_{\pm 0.09}}$ &$ \mathbf{93.06_{\pm 0.29}}$ &$\mathbf{92.79_{\pm 0.06}}$ &$\mathbf{90.25_{\pm 0.40}}$ \\
        \bottomrule
    \end{tabular}
    }
    \caption{The performance of the model with respect to GCE, CTRR and CTRR+GCE.}
    \vspace{-12 pt}
    \label{abla:rep_gce}
\end{table}

\section{Related Work}
In this section, we briefly review existing approaches for learning with label noise.

Noise-robust loss functions are designed to achieve a small error on clean data instead of corrupted data while training on the noisy training data \cite{DBLP:journals/corr/abs-2007-08199,DBLP:conf/icml/MaH00E020,DBLP:conf/nips/AmidWAK19}.
Mean absolute error (MAE) is robust to label noise \cite{DBLP:conf/aaai/GhoshKS17} but  it is not able to solve complicated classification tasks.
The determinant based mutual information loss $L_\text{DMI}$ is proved to be robust to label noise \cite{ldmi} but it only works on the instance-independent label noise.
The generalized cross entropy (GCE) \cite{DBLP:conf/nips/ZhangS18} takes advantages of MAE and implicitly combined it with CE.
The symmetric cross entropy (SL) \cite{DBLP:conf/iccv/0001MCLY019} designs a noise-robust reverse cross entropy loss and explicitly combines it with CE.
However, they have not completely addressed the issue as CE is prone to memorizing corrupted labels.
The LIMIT \cite{DBLP:conf/icml/HarutyunyanRSG20} proposes to add noise to gradients to address the memorization issue.
SLN \cite{noiseagainstnoise} proposes to combat label noise by adding noise to labels of data.
However, they may suffer from underfitting problems.

There are many different contrasitve regularization functions and architectures proposed to learn representations such as SimCLR \cite{DBLP:conf/nips/ChenKSNH20}, MoCo \cite{mocov2}, BYOL \cite{byol}, SimSiam \cite{simsiam} and SupCon \cite{DBLP:conf/nips/KhoslaTWSTIMLK20}, where SupCon is to learn supervised representations with clean labels while others focus on learning self-supervised representations without labels.
We aim to learn representations with noisy labels.
We mainly follow the SimSiam framework, but our method is not limited to the SimSiam framework.
Recently, some methods existing methods \cite{Li_2021_ICCV, DBLP:journals/data/CiortanDP21, DBLP:conf/cvpr/0001L21, DBLP:conf/iclr/0001XH21} leverage contrasitve representation learning to address noisy label problems.
Compared to their methods, we theoretically analyze the benefits of learning such contrastive representations and we focus on addressing a fundamental issue of how to avoid wrong contrastive pairs dominating the representation learning.

There are many other methods for learning with noisy labels.
Sample selection methods such as Co-teaching \cite{DBLP:conf/nips/HanYYNXHTS18}, Co-teaching+ \cite{DBLP:conf/icml/Yu0YNTS19}, SELFIE \cite{DBLP:conf/icml/SongK019}, and JoCoR \cite{DBLP:conf/cvpr/WeiFC020} are selecting small loss examples to update models where they treat small loss examples as clean ones.
Loss correction methods such as Forward/Backward method \cite{DBLP:conf/cvpr/PatriniRMNQ17} modify the sample loss based on a noise transition matrix.
Some works propose to improve the estimation of the noise transition matrix such as T-Revision \cite{DBLP:conf/nips/XiaLW00NS19} and Dual T \cite{DBLP:conf/nips/YaoL0GD0S20}.
Label correction methods such as ELR \cite{DBLP:conf/nips/LiuNRF20}, M-DYR-H \cite{DBLP:conf/icml/ArazoOAOM19} and PENCIL \cite{DBLP:conf/cvpr/YiW19} replace noisy labels with pseudo-labels using different strategies.
Methods like DivideMix \cite{DBLP:conf/iclr/LiSH20} combine the sample selection, label correction and semi-supervised techniques and empirically demonstrate their success to combat noisy labels.

\section{Conclusion}
We present a simple but effective CTRR to address the memorization issue.
Our theoretical analysis indicates that CTRR induces noise-robust representations without suffering from the underfitting problem.
From algorithmic perspectives, we propose a novel regularization function to avoid adverse effects from wrong pairs.
The empirical results also demonstrate the effectiveness of CTRR.
On the one hand, we show the potential combinations of existing methods to improve the model performance.
On the other hand, we evaluate our method under different contrastive learning frameworks.
Both of them reveal the flexibility of our method and the importance of correctly regularizing data representations.
We believe that CTRR can be jointly used with other existing methods to better solve machine learning tasks where there exists label noise.

\section*{Acknowledgement}
Li Yi was supported by NSERC Discovery Grants Program.
Sheng Liu was partially supported by NSF grant DMS 2009752, NSF NRT-HDR Award 1922658 and Alzheimer’s Association grant AARG-NTF-21-848627.
Boyu Wang was supported by NSERC Discovery Grants Program.

{\small
\bibliographystyle{ieee_fullname}
\bibliography{main}
}

\def\confName[final]{CVPR}
\def\confYear{2022}

\appendix
\section*{Appendix}

\section{Experiment Details} \label{sec:experiment}

\subsection{Implementation Details}
For CIFAR datasets, we use the model PreAct Resnet18 \cite{DBLP:conf/cvpr/resnethe}.
For ANIMAL-10N, we use a random initialized model Resnet18 \cite{DBLP:conf/cvpr/resnethe}.
For Clothing1M, we use an ImageNet pre-trained model  Resnet18\cite{DBLP:conf/cvpr/resnethe}.
We illustrate our framework in Figure \ref{fig:framework}.
The projection MLP is 3-layer MLP and the prediction MLP is 2-layer MLP as proposed in Simsiam \cite{simsiam}.
We use weak augmentations $\mcal{A}_w:\mcal{X}\rightarrow \mcal{X}$ including random resized crop and random horizontal flip for optimizing the cross entropy loss $\mcal{L}_\text{ce}$. 
Following SimSiam \cite{simsiam} \cite{DBLP:conf/nips/ChenKSNH20}, we use a strong augmentation $\mcal{A}_s:\mcal{X}\rightarrow \mcal{X}$ applied on images twice for optimizing the contrastive regularization term $\mcal{\widetilde L}_\text{ctr}$.
Specifically, $\{z_i\} = f \big (\mcal{A}_s(\{x_i\})\big )$ and $\{q_i\}=h\big(f\big(\mcal{A}_s(\{x_i\} \big)\big)$ for every example $x_i$, where one strong augmented image is for calculating $z$ and another is for calculating $q$.

\subsection{Algorithm}
According to our gradient analysis on two different clean images $x_i,x_j$ with $y_i=y_j$ and a noisy image $x_m$ with $y_m=y_i$, apply the regularization function Eq.~(\ref{loss:noisyctr}) can avoid representation learning dominated by the wrong contrastive pair $(x_i,x_m)$.
The analysis does not cover the same image with two different augmentations.
When applying the strong augmentation twice, each image $x$ has two different augmentations $x^\prime,x^{\prime \prime}$.
The contrastive pair $(x^\prime,x^{\prime \prime})$ will also dominate the representation learning given the property of Eq.~(\ref{loss:noisyctr}).
However, focusing on learning similar representations of $(x^\prime,x^{\prime \prime})$ does not help to form a cluster structure in representation space.
As mentioned in \cite{wang2020understanding}, learning this self-supervised  representations causes representations of data distributed uniformly on the unit hypersphere.
Hence, we want the gradient from the pair $(x^\prime,x^{\prime \prime})$ to be smaller when their representations approach to each other.
We use the original contrastive regularization to regularize the pair $(x^\prime,x^{\prime \prime})$.
The pseudocode of the proposed method is given in Algorithm \ref{alg:code}.

\begin{figure}[t]
    \centering
    \includegraphics[width=0.5\textwidth, trim=0 0 0.5cm 0]{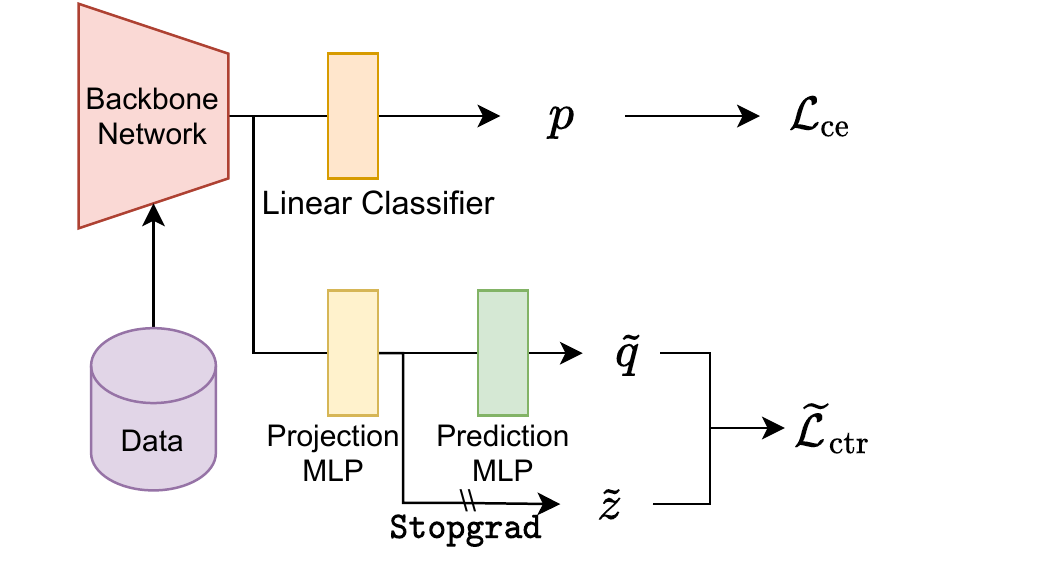}
    \caption{Illustration of our framework.}
    \label{fig:framework}
\end{figure}

\begin{algorithm*}[t]
\caption{CTRR Pseudocode in a PyTorch-like style}
\label{alg:code}
\definecolor{codeblue}{rgb}{0.25,0.5,0.5}
\definecolor{codekw}{rgb}{0.85, 0.18, 0.50}
\lstset{
  backgroundcolor=\color{white},
  basicstyle=\fontsize{7.5pt}{7.5pt}\ttfamily\selectfont,
  columns=fullflexible,
  breaklines=true,
  captionpos=b,
  commentstyle=\fontsize{7.5pt}{7.5pt}\color{codeblue},
  keywordstyle=\fontsize{7.5pt}{7.5pt}\color{codekw},
}
\begin{lstlisting}[language=python]
# Training
# f: backbone + projection mlp
# h: prediction mlp
# g: backbone + softmax linear classifier

for x, y in loader:  
    bsz = x.size(0)
    x1, x2 = strong_aug(x), strong_aug(x)  # strong random augmentation
    x3 = weak_aug(x) # weak random augmentation
    z1, z2 = f(x1), f(x2)  
    q1, q2 = h(z1), h(z2)
    p = g(x3)
    
    # avoid collapsing and gradient explosion
    z1 = torch.clamp(z1, 1e-4, 1-1e-4)
    z2 = torch.clamp(z2, 1e-4, 1-1e-4)
    p1 = torch.clamp(p1, 1e-4, 1-1e-4)
    p2 = torch.clamp(p2, 1e-4, 1-1e-4)
    
    # compute representations
    c1 = torch.matmul(q1, z2.t()) # B X B
    c2 = torch.matmul(q2, z1.t()) # B X B
    
    # compute contrastive loss for each pair
    m1 = torch.zeros(bsz, bsz).fill_diagonal_(1) # identity matrix
    m2 = torch.ones(bsz, bsz).fill_diagonal_(0) # 1-identity matrix
    #  - <i,i> + log(1-<i,j>)
    c1 = -c1*m1 + ((1-c1).log()) * m2 
    c2 = -c2*m1 + ((1-c2).log()) * m2
    c = torch.cat([c1, c2], dim=0)  # 2B X B
    
    # compute probability threshold
    probs_thred = torch.matmul(p, p.t()).fill_diagonal_(1).detach() # B X B
    mask = (probs_thred >= tau).float()
    probs_thred = probs_thred * mask
    # normalize the threshold
    weight = probs_thred / probs_thred.sum(1, keepdim=True)
    weight = weight.repeat((2, 1))  # 2B X B
    
    loss_ctr = (contrast_logits * weight).sum(dim=1).mean(0)
\end{lstlisting}
\end{algorithm*}

\subsection{Hyperparameters}
{\bf CIFAR.} Our method has two hyperparameters $\lambda$ and $\tau$. 
For each noise setting for CIFAR-10, we select the best hyperparameters: $\lambda$ from \{50, 130\} and $\tau$ from \{0.4, 0.8\}. 
For each noise setting for CIFAR-100, we select the best hyperparameters: $\lambda$ from \{50, 90\} and $\tau$ from \{0.05, 0.7\}. 
The batch size is set as $256$, and the learning rate is $0.02$ using SGD with a momentum of $0.9$ and a weight dacay of $0.0005$.

{\bf ANIMAL-10N \& Clothing1M.} 
For ANIMAL-10N, we set $\lambda=50$, $\tau=0.8$ and batch size is $256$.
The learning rate is set as $0.04$ with the same SGD optimizer as the CIFAR experiment.
For Clothing1M, we set $\lambda=90$, $\tau=0.4$ and batch size is $256$.
The learning rate is set as $0.06$ with the same SGD optimizer as above.

\section{Proofs of Theoretical Results}
\subsection{Proof for Theorem \ref{thm:relationship}}
\begin{theorem*}
Representations $Z$ learned by minimizing Eq.~(\ref{loss:ctr}) maximizes the mutual information $I(Z;X^+)$.
\end{theorem*}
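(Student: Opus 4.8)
The plan is to show that minimizing Eq.~(\ref{loss:ctr}) is equivalent to maximizing a variational lower bound on $I(Z;X^+)$ that becomes tight at the optimum, in the spirit of the InfoMax principle \cite{oord2019representation}. First I would note that the indicator $\mathbbm{1}\{y_i=y_j\}$ restricts the objective to same-class pairs, so minimizing Eq.~(\ref{loss:ctr}) over the data amounts to maximizing the expected cosine alignment $\expt_{p(z,x^+)}\big[\langle \tilde q,\tilde z^+\rangle\big]$, where $x^+$ is a positive example drawn from the same class as $x$, the target $\tilde z^+$ is the normalized stop-gradient representation of $x^+$, and $\tilde q$ is the normalized prediction $h(f(x))/\norm{h(f(x))}_2$. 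The stop-gradient is what makes this a genuinely asymmetric ``predict-the-positive'' objective, which is the form I will exploit below.

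The core step is the Barber--Agakov variational bound. Introducing any conditional density $r(z\mid x^+)$ as a surrogate for the true posterior $p(z\mid x^+)$ and using $\dkl{p(z\mid x^+)}{r(z\mid x^+)}\geq 0$, I obtain
\begin{equation*}
    I(Z;X^+) \;\geq\; H(Z) + \expt_{p(z,x^+)}\big[\log r(z\mid x^+)\big],
\end{equation*}
with equality precisely when $r(\cdot\mid x^+)=p(\cdot\mid x^+)$. Since the representations are $\ell_2$-normalized, the natural variational family lives on the unit sphere, and I would take $r$ to be von Mises--Fisher, $r(\tilde z\mid x^+)=C_d(\kappa)\exp\!\big(\kappa\,\langle \tilde z,\tilde q^+\rangle\big)$, whose mean direction is the normalized prediction $\tilde q^+$ produced from $x^+$ and whose log-density is affine in the inner product. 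Substituting this choice collapses the likelihood term to $\kappa\,\expt_{p(z,x^+)}[\langle \tilde z,\tilde q^+\rangle]$ plus a normalization constant, so that maximizing the lower bound is exactly maximizing the expected alignment, i.e.\ minimizing Eq.~(\ref{loss:ctr}). The symmetric pair of inner products in Eq.~(\ref{loss:ctr}) corresponds to symmetrizing this bound over the two roles of $x$ and $x^+$.

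The remaining step is to argue tightness, so that the loss minimizer attains the maximum of $I(Z;X^+)$ rather than merely a lower bound. Here I would use the fact that the predictor $h$ is free to match the conditional mean direction, so at the optimum the von Mises--Fisher surrogate aligns with the true posterior direction and closes the Barber--Agakov gap. The main obstacle is precisely this tightness/optimality argument: the bound also carries the marginal entropy $H(Z)$, which depends on the encoder and is \emph{not} controlled by the alignment term, so I must argue that minimizing Eq.~(\ref{loss:ctr}) does not trivially shrink $H(Z)$ (a collapse of $Z$ to a constant). This is exactly where the stop-gradient and the prediction head of the SimSiam design \cite{simsiam} enter: they prevent collapse and keep $H(Z)$ from degenerating, so the alignment term governs the optimization and the loss minimizer coincides with the MI maximizer. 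Making the variational family canonical and justifying that the bound is attained at the optimum is the part of the argument that requires the most care.
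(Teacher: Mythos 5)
Your proposal is correct in outline but takes a genuinely different route from the paper. The paper works directly with the decomposition $I(Z;X^+)=H(Z)-\expt_{X^+}\big[H(Z|X^+)\big]$ and characterizes the loss minimum \emph{exactly}: by Cauchy--Schwarz, $\mcal{L}_\text{ctr}\geq -2$ with equality iff $\tilde q=\tilde z^+$ and $\tilde q^+=\tilde z$ for all same-class pairs, which forces $f(x_1)=f(x_2)$ for any two same-class inputs; hence at the optimum $Z=f(X^+)$ is deterministic given $X^+$ and the conditional-entropy term is exactly minimized. The marginal term is then handled by an explicit assumption (SimSiam's empirics) that collapse is perfectly avoided, so distinct classes occupy distinct clusters and, for a balanced dataset, $Z$ is uniform over its $|\mcal{Y}|$ atoms, i.e.\ $H(Z)$ is \emph{maximized}. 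You instead lower-bound the same quantity via Barber--Agakov with a von Mises--Fisher surrogate whose log-density is affine in the cosine, so that loss minimization becomes variational-bound maximization. Your route buys a cleaner bridge to the InfoMax literature and exhibits the loss as a (scaled) variational log-likelihood; the paper's route buys an exact description of the minimizer that a one-sided bound alone cannot give.

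Two soft spots where your argument is weaker than the paper's, and which you should repair. First, on $H(Z)$: you argue only that the SimSiam design keeps $H(Z)$ ``from degenerating,'' but among encoders achieving perfect alignment the mutual information still varies with $H(Z)$, so concluding that the loss minimizer is the MI \emph{maximizer} requires $H(Z)$ to be maximal at the optimum --- this is exactly what the paper's balanced-dataset, equally-likely-clusters assumption supplies, and without an analogue your closing claim that ``the alignment term governs the optimization'' does not complete the proof. Second, on tightness: at the loss optimum the conditional $p(z\mid x^+)$ is a point mass on the sphere (this is what the Cauchy--Schwarz equality case shows), which no von Mises--Fisher with finite concentration $\kappa$ matches; the Barber--Agakov gap closes only in the limit $\kappa\to\infty$, so ``the predictor matches the conditional mean direction'' is not by itself sufficient --- you need a supremum over $\kappa$ or a limiting argument, with some care about differential entropies of degenerating densities. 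Both issues are fixable at the paper's own (informal) level of rigor, but as written they are the steps a referee would push on.
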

\begin{proof}
We first decompose the mutual information $I(Z;X^+)$:
\begin{align*}
    I(Z;X^+) =&\expt_{Z,X^+}\log \frac{p(Z|X^+)}{p(Z)} \\
    =&\expt_{X^+}\expt_{Z|X^+}[\log{p(Z|X^+)}]  - \expt_{Z,X^+}[p(Z)]\\
    =&-\expt_{X^+}\big[H(Z|X^+)\big] + H(Z).
\end{align*}
The first term $\expt_{X^+}\big[H(Z|X^+)\big]$ measures the uncertainty of $Z|X^+$, which is minimized when $Z$ can be completely determined by $X^+$.
The second term $H(Z)$ measures the uncertainty of $Z$ itself and it is minimized when outcomes of $Z$ are equally likely.

We next show that $Z$ can be completely determined by $X^+$ when minimum of Eq.~(\ref{loss:ctr}) is achieved and uncertainty of $Z$ itself is maintained by an assumption about the framework.
By the Cauchy-Schwarz inequality, 
\begin{align*}
    \expt_{X,X^+}\big[\mcal{L}_\text{ctr}(X,X^+)\big] \geq  & \expt_{X,X^+}\big[\norm{\tilde q}_2\norm{\tilde z^+}_2 \\
    &+ \norm{\tilde  q^+}_2\norm{\tilde  z}_2 ] =-2.
\end{align*}
The equality is attained when $\tilde q=\tilde z^+$ and $\tilde q^+=\tilde z$ for all $x,x^+$ from the same class.
For any three images $x_1,x_2,x_3$ from the same class, we have:
\begin{equation*}
    f(x_1) = g(x_3),\quad f(x_2) = g(x_3),
\end{equation*}
where $g = h(f(\cdot))$.
We can find $f(x_1)=f(x_2)$ for any images $x_1,x_2$ from the same class.
The result can be easily extended to the general case: $f(X_1)=f(X_2)$ for any $(X_1,Y_1)\sim P(X,Y), (X_2,Y_2)\sim P(X,Y)$ with $Y_1=Y_2$.
Thus $Z$ can be determined by $X^+$ with the equation $Z=f(X^+)$, which minimizes $\expt_{X^+}\big[H(Z|X^+)\big]$.

When $p(Z=c_y|Y=y)=\frac{1}{|\mcal{Y}|}$, the entropy $H(Z)$ is maximized.
With extensive empirical results in Simsiam \cite{simsiam}, we assume the collapsed solutions are perfectly avoided by using the SimSiam framework.
By this assumption, $c_j \not= c_k$ for any $j\not=k$.
The model learns different clusters $c_y$ for different $y$ and representations with different labels have different clusters.
Therefore, for a balanced dataset, the outcomes of $Z$ are equally likely and it maximizes the second term $H(Z)$.
In summary, the learned representations by Eq.~(\ref{loss:ctr}) maximizes the mutual information $I(Z;X^+)$.
\end{proof}

\subsection{Proof for Theorem \ref{thm:1}}
\begin{theorem*}
Given a distribution $D(X,Y,\tilde Y)$ that is $(\epsilon, \gamma)$-Distribution, we have 
\begin{align}
    &I(X;Y) - \epsilon  \leq I(Z^\star;Y) \leq I(X;Y), \label{eq2}\\
    &I(Z^\star;\tilde Y)  \leq I(X;\tilde Y) - \gamma + \epsilon. \label{eq3}
\end{align}
\end{theorem*}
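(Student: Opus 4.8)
The plan is to lean on the characterization of the maximiser $Z^\star$ obtained in the proof of Theorem~\ref{thm:relationship}: the optimal encoder satisfies $f(X_1)=f(X_2)$ for any two inputs from the same class, so that $Z^\star=f(X)=f(X^+)$ almost surely and $Z^\star$ is (a.s.) a deterministic function of the class. All four inequalities should then follow from the data processing inequality (DPI) applied along suitable Markov chains, combined with chain-rule/interaction-information identities and the single structural fact that same-class examples are conditionally independent, i.e. $I(X;X^+\mid Y)=0$ (they are drawn independently from the same class-conditional law, and $Y$ is a deterministic function of $X$).

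For Eq.~(\ref{eq2}), the upper bound $I(Z^\star;Y)\leq I(X;Y)$ is immediate from DPI along $Y-X-Z^\star$, since $Z^\star=f(X)$. For the lower bound I would first note that for any $Z=f(X)$, DPI gives $I(Z;X^+)\leq I(X;X^+)$, and since $Z^\star$ is the maximiser this ceiling is attained, so $I(Z^\star;X^+)=I(X;X^+)$. Because $Z^\star=f(X)$ and $X\perp X^+\mid Y$, the chain $Z^\star-Y-X^+$ is Markov, whence $I(Z^\star;Y)\geq I(Z^\star;X^+)=I(X;X^+)$. Finally the chain rule $I(X;Y,X^+)=I(X;X^+)+I(X;Y\mid X^+)=I(X;Y)+I(X;X^+\mid Y)$ together with $I(X;X^+\mid Y)=0$ yields $I(X;X^+)=I(X;Y)-I(X;Y\mid X^+)\geq I(X;Y)-\epsilon$ by Eq.~(\ref{eq:asump11}), completing the lower bound.

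For Eq.~(\ref{eq3}) I would exploit $Z^\star=f(X^+)$: since $Z^\star$ is a function of $X^+$, DPI along $\tilde Y-X^+-Z^\star$ gives $I(Z^\star;\tilde Y)\leq I(X^+;\tilde Y)$. Expanding $I(X,X^+;\tilde Y)$ in two ways produces the identity $I(X^+;\tilde Y)=I(X;\tilde Y)-I(X;\tilde Y\mid X^+)+I(X^+;\tilde Y\mid X)$. Applying Eq.~(\ref{eq:asump2}) to the middle term supplies the $-\gamma$ contribution, so it remains only to control the cross term $I(X^+;\tilde Y\mid X)$: under the standard example-wise corruption model the label $\tilde Y$ of $X$ is conditionally independent of another image $X^+$ given $X$, so this term vanishes and is in particular $\leq\epsilon$, yielding $I(Z^\star;\tilde Y)\leq I(X;\tilde Y)-\gamma+\epsilon$.

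The main obstacle is precisely this last step of Eq.~(\ref{eq3}): pinning down the cross term $I(X^+;\tilde Y\mid X)$ rigorously forces one to make the generative assumptions on the label noise explicit rather than relying on the informal $(\epsilon,\gamma)$ conditions alone (the bound in fact holds with slack, as the honest value of this term is $0$). A secondary gap to fill is the claim $I(Z^\star;X^+)=I(X;X^+)$, which presumes the network class is rich enough to attain the DPI ceiling; this is exactly the content borrowed from Theorem~\ref{thm:relationship}, so I would state it as the operating assumption and then feed the $(\epsilon,\gamma)$-distribution conditions into the resulting Markov/chain-rule skeleton.
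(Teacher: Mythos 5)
Your proposal is correct in outline and reaches both bounds, but by a genuinely different route from the paper's. For the lower bound in \eqref{eq2}, the paper never assumes $I(X;X^+\mid Y)=0$: it transfers the interaction information from $X$ to $Z^\star$, asserting from maximality that $I(Z^\star;X^+)=I(X;X^+)$ and $I(Z^\star;X^+\mid Y)=I(X;X^+\mid Y)$, which yields the identity $I(Z^\star;Y)=I(X;Y)-I(X;Y\mid X^+)+I(Z^\star;Y\mid X^+)$; dropping the nonnegative last term and applying \eqref{eq:asump11} finishes. You instead posit conditional independence of the positive pair given the class, obtain the Markov chain $Z^\star - Y - X^+$, and run DPI plus the chain rule. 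For \eqref{eq3} the divergence is sharper: the paper reuses the same identity with $\tilde Y$ in place of $Y$ and controls the residual by the conditional DPI $I(Z^\star;\tilde Y\mid X^+)\le I(Z^\star;Y\mid X^+)\le\epsilon$ along $\tilde Y\leftarrow Y\leftrightarrow X\rightarrow Z$ (implicitly assuming class-conditional noise), whereas you use the class-collapse property $Z^\star=f(X^+)$ and kill the cross term $I(X^+;\tilde Y\mid X)$ via example-wise corruption. The trade-offs cut both ways. The paper's route stays entirely at the mutual-information level and, importantly, never needs $Z^\star$ to be $\sigma(X^+)$-measurable --- and rightly so, since MI maximality alone does not force class collapse (the identity encoder $Z=X$ also attains $I(Z;X^+)=I(X;X^+)$ but is not a function of $X^+$); you correctly flag that this fact must be imported as an operating assumption from the contrastive-loss minimizer of Theorem~\ref{thm:relationship}, not derived from $\argmax_{Z_\theta} I(Z_\theta;X^+)$. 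Conversely, your explicit structural assumptions repair a soft spot in the paper: the asserted equality $I(Z^\star;X^+\mid Y)=I(X;X^+\mid Y)$ (and its $\tilde Y$ analogue needed for the ``Similarly'' step) does not follow from $I(Z^\star;X^+)=I(X;X^+)$ alone --- only the $\le$ direction does --- whereas under your $I(X;X^+\mid Y)=0$ it holds trivially, both sides vanishing. Under your noise model your bound for \eqref{eq3} is in fact stronger, $I(Z^\star;\tilde Y)\le I(X;\tilde Y)-\gamma$, with the $+\epsilon$ pure slack, and $\tilde Y\perp X^+\mid X$ even accommodates instance-dependent noise, which the paper's chain $\tilde Y\leftarrow Y$ does not.
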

\begin{proof}
  The Theorem builds upon the Theorem 5 from \cite{tsai2021selfsupervised}.
We first provide the proof for the first inequality, which can also be obtained from \cite{tsai2021selfsupervised}.
Then we provide the proof for the second inequality.

For the first inequality, by adopting Data Processing Inequality in the Markov Chain $Y\leftrightarrow X\rightarrow Z$, we have $I(X;Y) \geq I(Z;Y)$ for any $Z\in \mcal{Z}$. Then, we have $I(X;Y) \geq I(Z^\star;Y)$.
Since $Z^\star=\argmax_{Z_\theta}I(Z_\theta;X^+)$, and $I(Z_\theta; X^+)$ is maximized at $I(X;X^+)$, then $I(Z^\star;X^+) = I(X; X^+)$ and $I(Z^\star;X^+|Y) = I(X; X^+|Y)$.
Meanwhile, use the result $I(Z^\star;X^+;Y) = I(X;X^+;Y)$, which is given by
\begin{align*}
    I(Z^\star;X^+;Y) & = I(Z^\star;X^+) - I(Z^\star;X^+|Y) \\
    &= I(X;X^+) - I(X;X^+|Y) \\
    &= I(X;X^+;Y),
\end{align*}
we have 
\begin{align} \label{eq:10}
    I(Z^\star;Y)=&I(X;X^+;Y) + I(Z^\star;Y|X^+)\nonumber \\ 
    =&I(X;Y)-I(X;Y|X^+)+I(Z^\star;Y|X^+).
\end{align}
Thus, by Eq.~(\ref{eq:10}) and the Definition \ref{assump:a1}, we get
\begin{equation}
    I(Z^\star;Y)\geq I(X;Y)-I(X;Y|X^+) \geq I(X;Y) -\epsilon
\end{equation}

Now we present the second inequality $I(Z^\star;\tilde Y)  \leq I(X;\tilde Y) - \gamma + \epsilon$.

Similarly, by Eq.~(\ref{eq:10}), we have
\begin{align}
    I(Z^\star;\tilde Y) &= I(X;\tilde Y)- I(X;\tilde Y|X^+)+I(Z^\star;\tilde Y|X^+)\\
    &\leq I(X;\tilde Y) - \gamma + I(Z^\star;\tilde Y|X^+) \\
    &\leq I(X;\tilde Y) - \gamma + I(Z^\star; Y|X^+) \\
    &\leq I(X;\tilde Y) - \gamma + \epsilon
\end{align},
where the first and the third inequalities are by the definition \ref{assump:a1};
the second inequality is by the Data Processing Inequality in the Markov Chain $\tilde Y\leftarrow Y \leftrightarrow X \rightarrow Z$.

\end{proof}

\subsection{Proof for Lemma \ref{lem:1}}
\begin{lemma*}
Consider a pair of random variables $(X,\tilde Y)$.
Let $\hat Y$ be outputs of any classifier based on inputs $Z_\theta$,
and $\tilde e=\mathbbm{1}\{\hat Y \not= \tilde Y \}$, where $\mathbbm{1}\{A\}$ be the indicator function of event $A$. 
Then, we have
\begin{equation*}
    \expt[\tilde e] \geq \frac{H(\tilde Y)-I(Z_\theta;\tilde Y) -H(\tilde e)}{\log(|\mcal{\tilde Y}|) - 1}.
\end{equation*}
\end{lemma*}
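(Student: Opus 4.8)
The statement is a Fano-type lower bound on the error probability, so the plan is to combine Fano's inequality with the Data Processing Inequality (DPI). The governing structure is the Markov chain $\tilde Y \to Z_\theta \to \hat Y$: the prediction $\hat Y$ is produced from the representation $Z_\theta$ alone, hence conditionally on $Z_\theta$ it is independent of $\tilde Y$. I would record this first, as both ingredients below rely on it.

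Next I would establish the Fano step for the pair $(\tilde Y,\hat Y)$. Expanding the joint conditional entropy $H(\tilde Y,\tilde e\mid\hat Y)$ by the chain rule in two ways and using that $\tilde e=\mathbbm{1}\{\hat Y\neq\tilde Y\}$ is a deterministic function of $(\tilde Y,\hat Y)$, one copy equals $H(\tilde Y\mid\hat Y)$ and the other equals $H(\tilde e\mid\hat Y)+H(\tilde Y\mid\tilde e,\hat Y)$. I would then bound $H(\tilde e\mid\hat Y)\le H(\tilde e)$ and split $H(\tilde Y\mid\tilde e,\hat Y)$ over the value of $\tilde e$: on the event $\{\tilde e=0\}$ we have $\tilde Y=\hat Y$ so the term vanishes, while on $\{\tilde e=1\}$ the variable $\tilde Y$ ranges over at most $|\mcal{\tilde Y}|-1$ remaining values, contributing at most $\expt[\tilde e]\log(|\mcal{\tilde Y}|-1)$. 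This gives
\[
    H(\tilde Y\mid\hat Y)\;\le\;H(\tilde e)+\expt[\tilde e]\,\log\!\big(|\mcal{\tilde Y}|-1\big).
\]

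Finally I would lower-bound the left-hand side through mutual information: $H(\tilde Y\mid\hat Y)=H(\tilde Y)-I(\tilde Y;\hat Y)$, and by the DPI along $\tilde Y\to Z_\theta\to\hat Y$ we have $I(\tilde Y;\hat Y)\le I(Z_\theta;\tilde Y)$, so $H(\tilde Y\mid\hat Y)\ge H(\tilde Y)-I(Z_\theta;\tilde Y)$. Chaining this with the Fano inequality and solving for $\expt[\tilde e]$ (the denominator being positive) yields the claimed bound. The derivation is largely routine; the only step needing care is the conditional-entropy split that produces the cardinality term in the denominator, since that is where the error event must be handled correctly. I would also emphasize that the DPI step is precisely what replaces any classifier-specific quantity by $I(Z_\theta;\tilde Y)$, which is the content the lemma is designed to expose.
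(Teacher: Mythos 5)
Your proof is correct and takes essentially the same Fano-type route as the paper: the identical two-way chain-rule decomposition of $H(\tilde Y,\tilde e\mid\cdot)$ with the bounds $H(\tilde e\mid\cdot)\le H(\tilde e)$ and $H(\tilde Y\mid \tilde e=1,\cdot)\le \log(|\mcal{\tilde Y}|-1)$, the only cosmetic difference being that you condition on $\hat Y$ alone and apply the DPI along $\tilde Y\to Z_\theta\to \hat Y$ at the end, whereas the paper carries $Z_\theta$ in the conditioning throughout and instead verifies $I(\tilde Y;Z_\theta,\hat Y)=I(\tilde Y;Z_\theta)$ from the same Markov chain. Note that, like the paper's own derivation, you obtain the denominator $\log(|\mcal{\tilde Y}|-1)$, which matches what is actually proven; the lemma statement's $\log(|\mcal{\tilde Y}|)-1$ appears to be a misplaced parenthesis.
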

\begin{proof}
    If we are given any two of $\{\tilde e=1\}, \hat Y, \tilde Y$, the other one is known.
    By the properties of conditional entropy, $H(\tilde Y, \tilde e|\hat Y,Z_\theta)$ can be decomposed into the two equivalent forms.
    \begin{align}
        H(\tilde Y, \tilde e|\hat Y,Z_\theta)&=H(\tilde Y| \tilde e, \hat Y,Z_\theta) + H(\tilde e|\hat Y, Z_\theta)\nonumber \\
        &=\underbrace{H(\tilde Y| \tilde e, \hat Y,Z_\theta)}_\text{0} + H(\tilde Y|\hat Y,Z_\theta) \label{eq:17}
    \end{align}
    The first equality can also be decomposed into another form:
    \begin{align}
     &H(\tilde Y, \tilde e|\hat Y,Z_\theta)\nonumber \\
       =& H(\tilde Y| \tilde e, \hat Y,Z_\theta) + H(\tilde e|\hat Y, Z_\theta)\nonumber\\
       =&p(\tilde e =1)H(\tilde Y| \tilde e=1, \hat Y,Z_\theta)\nonumber \\
       &+ p(\tilde e =0)\underbrace{H(\tilde Y| \tilde e=0, \hat Y,Z_\theta)}_\text{0}+ H(\tilde e|\hat Y, Z_\theta)\nonumber \\
       =&p(\tilde e=1)H(\tilde Y| \tilde e=1, \hat Y,Z_\theta)+ H(\tilde e|\hat Y, Z_\theta) \label{eq:18}
    \end{align}
    Relating Eq.~(\ref{eq:17}) to Eq.~(\ref{eq:18}), we have
    \begin{align*}
        \expt[\tilde e]&=\frac{H(\tilde Y|\hat Y,Z_\theta)-H(\tilde e|\hat Y, Z_\theta)}{H(\tilde Y| \tilde e=1, \hat Y,Z_\theta)} \\
        & \geq \frac{H(\tilde Y|\hat Y,Z_\theta)-H(\tilde e|\hat Y, Z_\theta)}{\log{(|\mcal{Y}|-1)}} \\
        & \geq \frac{H(\tilde Y|\hat Y,Z_\theta)-H(\tilde e)}{\log{(|\mcal{Y}|-1)}} \\
        & =\frac{H(\tilde Y)-I(\tilde Y;Z_\theta, \hat Y)-H(\tilde e)}{\log{(|\mcal{Y}|-1)}} \\
        &=\frac{H(\tilde Y)-I(\tilde Y;Z_\theta)-H(\tilde e)}{\log{(|\mcal{Y}|-1)}}.
    \end{align*}
The first inequality is by $H(\tilde Y| \tilde e=1, \hat Y,Z_\theta) \leq \log{(|\mcal{Y}|-1)}$, where $\tilde Y$ can take at most $|\mcal{Y}|-1$ values.
For the second inequality,
\begin{align*}
    H(\tilde e|\hat Y, Z_\theta)&=H(\tilde e) - I(\tilde e; \hat Y, Z_\theta) \\
    & \leq  H(\tilde e).
\end{align*}
For the last equality,
\begin{align*}
    I(\tilde Y;Z_\theta, \hat Y) =& H(Z_\theta, \hat Y)-H(Z_\theta, \hat Y|\tilde Y)\\
    =&H(Z_\theta)+H(\hat Y|Z_\theta)\\
    &-H(Z_\theta|\tilde Y)-H(\hat Y|Z_\theta,\tilde Y) \\
    =&I(Z_\theta, \tilde Y) + I(\hat Y;\tilde Y|Z_\theta)\\
    =&I(Z_\theta, \tilde Y),
\end{align*}
where $I(\hat Y;\tilde Y|Z_\theta)=0$ given the Markov Chain $\tilde Y\leftarrow Y \leftrightarrow X \rightarrow Z \rightarrow \hat Y$:
\begin{align*}
    I(\hat Y;\tilde Y|Z_\theta)=&H(\hat Y|Z_\theta)-H(\hat Y|Z_\theta,\tilde Y)\\
    &=H(\hat Y|Z_\theta)-H(\hat Y|Z_\theta)=0.
\end{align*}
\end{proof}

\subsection{Proof for Lemma \ref{lem:2}}
\begin{lemma*} 
Let $R(X)=\inf_{g} \expt_{X,Y}[\mcal{L}(g(X),Y)]$ be the minimum risk over the joint distribution $X\times Y$, where $\mcal{L}(p,y)=\sum_{i=1}^{\mcal{Y}}y^{(i)}\log p^{(i)}$ is a CE loss  and $g$ is a function mapping from input space to label space.
Let $R(Z^\star)=\inf_{g^\prime} \expt_{Z^\star,Y}[\mcal{L}(g^\prime(Z^\star),Y)]$ be the minimum risk over the joint distribution $Z^\star\times Y$ and $g^\prime$ maps from representation space to label space. Then,
    \begin{equation*}
        R(Z^\star) \leq R(X) + \epsilon.
    \end{equation*}
\end{lemma*}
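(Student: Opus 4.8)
The plan is to reduce both minimum risks to conditional entropies and then invoke the lower bound on $I(Z^\star; Y)$ from Theorem~\ref{thm:1}. The central fact I would use is the variational characterization of the cross-entropy loss: for any random pair $(U, Y)$, the Bayes-optimal classifier under $\mcal{L}$ is the true posterior $p(Y \mid U)$, and substituting it yields
\begin{equation*}
\inf_{g} \expt_{U,Y}[\mcal{L}(g(U), Y)] = H(Y \mid U).
\end{equation*}
This follows from Gibbs' inequality (equivalently, non-negativity of the KL divergence): for every conditional distribution $q(\cdot \mid u)$ one has $\expt_{Y \mid u}[-\log q(Y \mid u)] \geq H(Y \mid U = u)$, with equality exactly when $q = p(\cdot \mid u)$. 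Applying this with $U = X$ gives $R(X) = H(Y \mid X)$, and with $U = Z^\star$ gives $R(Z^\star) = H(Y \mid Z^\star)$.

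Next I would convert the target inequality into a statement purely about mutual information. Writing $H(Y \mid X) = H(Y) - I(X; Y)$ and $H(Y \mid Z^\star) = H(Y) - I(Z^\star; Y)$, the shared term $H(Y)$ cancels and the difference of risks telescopes:
\begin{equation*}
R(Z^\star) - R(X) = H(Y \mid Z^\star) - H(Y \mid X) = I(X; Y) - I(Z^\star; Y).
\end{equation*}
Hence the lemma is equivalent to the bound $I(X; Y) - I(Z^\star; Y) \leq \epsilon$, which is precisely a rearrangement of the left-hand inequality $I(Z^\star; Y) \geq I(X; Y) - \epsilon$ already established in Eq.~(\ref{eq2}) of Theorem~\ref{thm:1}. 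Combining the two lines then delivers $R(Z^\star) \leq R(X) + \epsilon$ immediately.

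The main obstacle, and the only step demanding care, is the identity $R = H(Y \mid \cdot)$: one must argue that the infimum over classifiers is genuinely attained (or approached) by the true posterior, so that the value equals the conditional entropy rather than merely bounding it. This is where the one-hot structure of $y$ and Gibbs' inequality enter, and it implicitly requires that the class of admissible maps $g$ and $g'$ be rich enough to represent the posterior; otherwise only $R \geq H(Y \mid \cdot)$ holds and an additional approximation term would be needed. Once this identity is secured, the rest is a one-line substitution using Theorem~\ref{thm:1}, with no further estimation required.
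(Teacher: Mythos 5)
Your proof is correct and follows essentially the same route as the paper's: both reduce the two risks to conditional entropies via the variational identity $\inf_g \expt[\mcal{L}(g(\cdot),Y)] = H(Y\mid\cdot)$, rewrite them using $I(A;B)=H(A)-H(A\mid B)$, and conclude from the lower bound $I(Z^\star;Y)\geq I(X;Y)-\epsilon$ in Theorem~\ref{thm:1}. The only difference is that you justify the variational identity explicitly via Gibbs' inequality (and flag the richness assumption on the classifier class), whereas the paper simply cites it from the literature.
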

\begin{proof}
The lemma is given by the variational form of the conditional entropy $H(Y|Z^\star) = \inf_{g^\prime} \expt_{Z^\star,Y}[\mcal{L}(g^\prime(Z^\star),Y)]$ \cite{DBLP:conf/nips/FarniaT16, variationentropy}.
According to a property of mutual information,
\begin{equation*}
    I(A;B) = H(A) - H(A|B),
\end{equation*}
we have $R(Z^\star)= H(Y) - I(Z^\star;Y)$.
By the results of Theorem \ref{thm:1},
\begin{align*}
    R(Z^\star) \leq & H(Y)-I(X;Y) + \epsilon \\
    =& H(Y|X) = \inf_{g} \expt_{X,Y}[\mcal{L}(g(X),Y)].
\end{align*}
\end{proof}

\section{Gradients of Contrastive regularization Functions}
For the contrastive regularization function
\begin{equation*}
    \mcal{L}_\text{ctr}^\prime(x_i,x_j)=-\big(\frac{q_i}{\norm{q_i}_2}\cdot \frac{z_j}{\norm{z_j}_2} + \frac{q_j}{\norm{q_j}_2}\cdot \frac{z_i}{\norm{z_i}_2}\big),
\end{equation*}
we only consider the case $\mathbbm{1}\{p_i^\top p_j \geq \tau\}=1$ because $\mcal{L}^\prime_\text{ctr}(x_i,x_j)$ is not calculated in the algorithm when $\mathbbm{1}\{p_i^\top p_j \geq \tau\}=0$.
We assume that $h$ is an identity function and $x_i$, $x_j$ are from the same class for simplicity.

Let $a=\norm{q_i}_2$, $b=q_i$, $x=\frac{z_j}{\norm{z_j}_2}$ and $c=\frac{b}{a}$.
According to the equation $a^2=b^\top b$, we differentiate both side of the equation and get
\begin{equation} \label{eq:diff}
    2a\diff a = 2 b^\top \diff b.
\end{equation}
In the meanwhile, 
\begin{align*}
    \partial \big(\frac{b^\top x}{a} \big) &=\frac{\diff (b^\top x)a-\diff a b^\top x}{a^2} \\
    &\stackrel{(\ref{eq:diff})}{=} \frac{ax^\top \diff b}{a^2} - \frac{b^\top \diff b b^\top x}{a^3} \\
    &=\frac{x^\top \diff b}{a} - \frac{a^2c^\top x c^\top \diff b}{a^3}   \\
    &=\frac{1}{a}\big(x^\top - c^\top x c^\top \big) \diff b.
\end{align*}
Taking $a,b,c$ and $x$ back to the equation, we get the result
\begin{align*}
    \frac{\partial \mcal{L}_\text{ctr}^\prime (x_i,x_j)}{\partial q_i}&=-\frac{1}{\norm{q_i}_2}\big(\frac{q_j}{\norm{q_j}_2} -  (\frac{q_i^\top  q_j}{\norm{q_i}_2 \norm{q_j}_2}) \frac{q_i}{\norm{q_i}_2}   \big).
\end{align*}
Note that $z_i = \texttt{Stopgrad}(q_i)$ because of the identity map $h$.
Let $c_i=1/\norm{q_i}_2^2$ and then we have
\begin{equation*}
    \norm{ \frac{\partial \mcal{L}_\text{ctr}^\prime(x_i,x_j)}{\partial q_i}}_2^2
    = c_i(1-(\tilde q_i^\top \tilde q_j)^2).
\end{equation*}

Similarly, for the contrastive regularization function 
\begin{align*}
    \mcal{\widetilde L}_\text{ctr}(x_i,x_j)=&\bigg(\log{\big(1- \langle\,\frac{q_i}{\norm{q_i}_2}, \frac{z_j}{\norm{z_j}_2}  \rangle \big)} \\
    &+
     \log{\big(1- \langle\,\frac{q_j}{\norm{q_j}_2}, \frac{z_i}{\norm{z_i}_2}  \rangle \big)} \bigg),
\end{align*}
\begin{align*}
    \frac{\partial \mcal{\widetilde L}_\text{ctr}(x_i,x_j)}{\partial q_i} =&
    \frac{1}{1-\tilde q_i^\top \tilde q_j}\frac{\partial \mcal{L}_\text{ctr}^\prime (x_i,x_j)}{\partial q_i} \\
    =& c_i(1+\tilde q_i^\top \tilde q_j).
\end{align*}

\end{document}